\definecolor{darkgreen}{rgb}{0.0,0,0.9}
\definecolor{darkblue}{rgb}{0.0, 0.0, 0.55}
\definecolor{darkpowderblue}{rgb}{0.0, 0.2, 0.6}
\let\chapter\section
\DeclareMathAlphabet{\mathpzc}{OT1}{pzc}{m}{it}
\newtheorem{propo}{Proposition}[section]
\newtheorem{lemma}[propo]{Lemma}
\newtheorem{coro}[propo]{Corollary}
\newtheorem{thm}[propo]{Theorem}
\newtheorem{remark}[propo]{Remark}
\newcommand{\twonorm}[1]{\left\|#1\right\|_{\ell_2}}
\newcommand{\fronorm}[1]{\left\|#1\right\|_{F}}
\def\tx{\widetilde{x}}
\def\reals{{\mathbb R}}
\def\eps{{\varepsilon}}
\def\prob{{\mathbb P}}
\def\E{{\mathbb E}}
\def\L0{{L_i}}
\def\<{\langle}
\def\>{\rangle}
\def\hth{\widehat{\theta}}
\def\F{{\sf F}}
\def\ind{{\mathbb I}}
\def\F{{\sf F}}
\def\normal{{\sf N}}
\def\sT{{\sf T}}
\def\sign{{\rm sign}}
\def\v*{v_i}
\def\T*{T_i}
\def\u*{u_i}
\def\F*{F_i}
\def\tW{\widetilde{W}}
\def\tx{\tilde{x}}
\def\cL{\mathcal{L}}
\def\th{\theta}
\def\hth{{\widehat{\theta}}}
\def\cX{\mathcal{X}}
\newcommand{\pnorm}[1]{\left\|#1\right\|_{\ell_p}}
\def\l1u{W}
\newcommand{\ajcomment}[1]{}
\newcommand{\labitem}[2]{%
\def\@itemlabel{\text{#1}}
\item
\def\@currentlabel{#1}\label{#2}}
\DeclareMathAlphabet{\mathpzc}{OT1}{pzc}{m}{it}
\def\AR{\mathsf{AR}}
\def\SR{\mathsf{SR}}
\def\SR{\mathsf{SR}}
\def\AR{\mathsf{AR}}
\def\BR{\mathsf{BR}}
\def\tmu{\widetilde{\mu}}
\def\vph{\varphi}
\def\tW{\widetilde{W}}
\newcommand{\indep}{\perp \!\!\! \perp}
\title{Adversarial robustness for latent models: Revisiting the robust-standard accuracies tradeoff}
\author{ 
Adel Javanmard\thanks{Data Sciences and Operations Department, University
of Southern California}  \and 
Mohammad Mehrabi\footnotemark[1] \thanks{The names of the authors are in alphabetical order. }
}
\begin{document}

\maketitle

\begin{abstract}
Over the past few years, several adversarial training methods have been proposed to improve the robustness of machine learning models against adversarial perturbations in the input. Despite remarkable progress in this regard, adversarial training is often observed to drop the standard test accuracy. This phenomenon has intrigued the research community to investigate the potential tradeoff between standard accuracy (a.k.a generalization) and robust accuracy (a.k.a robust generalization) as two performance measures. In this paper, we revisit this tradeoff for latent models and argue that this tradeoff is mitigated when the data enjoys a low-dimensional structure. In particular, we consider binary classification under two data generative models, namely Gaussian mixture model and generalized linear model, where the features data lie on a low-dimensional manifold. We develop a theory to show that the low-dimensional manifold structure allows one to obtain models that are nearly optimal with respect to both, the standard accuracy and the robust accuracy measures. We further corroborate our theory with several numerical experiments, including Mixture of Factor Analyzers (MFA) model trained on the MNIST dataset.
\end{abstract}

\section{Introduction}
We are witnessing an unparalleled growth of machine learning tools in various applications domain, where these tools are deployed to inform decisions that directly impact human’s lives, from health interventions to credit decisions, sentencing and autonomous driving. Given the safety-critical nature of these applications, reliability and robustness of machine learning systems have become one of the paramount goals of today's AI.

Robust estimation has been one of the central topics in statistics, notably by the seminal work of Tukey \citep{tukey1960survey}, Huber \citep{huber1992robust}, and Hampel \citep{hampel1968contributions}, among others. The majority of work in this area has focused on robustness with respect to outliers (a small fraction of predictors/and or response variables which are contaminated by gross errors.) 

Another relevant notion that has spurred a surge of interest in recent years  is that of \emph{adversarial robustness}. While machine learning models, and deep learning in particular, have shown remarkable empirical performance, many of these models are known to be highly vulnerable to adversarially chosen perturbations to the input data at test time, known as \emph{adversarial attacks}. Even more surprisingly, many of such adversarial attacks can be designed to be slight modifications of the input which are seemingly innocuous and imperceptible. For example, in image processing and video analysis there are several examples of adversarial attacks in form of indiscernible  pixel-wise perturbations which can significantly degrade the performance of the state-of-the-art classifiers~\citep{szegedy2013intriguing,biggio2013evasion}. Other examples include well-designed malicious contents like malware which can pass the scanning classifiers and yet harm the system, or adversarial attacks on speech recognition systems, such as GoogleNow or Siri, which are incomprehensible or even completely inaudible to human and can still control the virtual assistant software \citep{carlini2016hidden,vaidya2015cocaine,zhang2017dolphinattack}.

In response to this fragility, a growing body of work in the past few years has sought to improve the robustness of machine learning systems against adversarial attacks.  Despite remarkable progress in designing robust training algorithms and certifiable defenses, it is often observed that these methods compromise the statistical accuracy on unperturbed test data (i.e., test data drawn form the same distribution as training data). Such observation had led prior work to speculate a tradeoff between the two fundamental notions of \emph{robustness} and \emph{generalization} (for a non-exhaustive list see e.g, \citep{DBLP:conf/iclr/MadryMSTV18,raghunathan2019adversarial,min2020curious,mehrabi2021fundamental}). For example, the highest obtained 
$\ell_\infty$-robust accuracy on CIFAR10 (without using additional data) with $\eps_\infty = 8/255$ is 60\%, with standard accuracy of 85\% (which is 10\% less than state-of-the-art standard accuracy for $\eps_\infty = 0$).

Some of the promising adversarial training methods, such as TRADES~\citep{DBLP:conf/icml/ZhangYJXGJ19} acknowledge such tradeoff by including a regularization parameter which allows to tune between these two measures of performance. There has been also recent line of work \citep{javanmard2020precise,pmlr-v125-javanmard20a} which provides precise asymptotic theory for this tradeoff and how it is quantitively shaped by different components of the learning problem (e.g, adversary's power, geometry of perturbations set, overparamterization, noise level in training data, etc.) For the setting of linear regression and binary classification it is proved that there is an inherent tradeoff between robustness and standard accuracy (generalization) which holds at population level and for any (potentially computationally intensive) training algorithms~\citep{pmlr-v125-javanmard20a,dobriban2020provable,mehrabi2021fundamental}. Nonetheless, these work make strong assumptions on the distribution of data (e.g, Gaussian or Gaussian mixture models), which fail to capture various natural structures in data.  This stimulates the following tantalizing question:  
\begin{quote}
\emph{(*) Are there natural data generative models under which the tradeoff between robustness and the standard accuracy (generalization) vanishes, in the sense that one can find models which are performing well (or even optimal) with respect to both measures?}
\end{quote}
As a step toward answering this question, \cite{NEURIPS2020_61d77652} show that when data is well separated, there is no inherent conflict between standard accuracy and robustness. It also provides numerical experiments on a few image datasets to argue that these data are indeed $r$-well separated for some value $r$ larger than the perturbation radii used in adversarial attacks (i.e., data from different classes are at least $r$ distance  apart in the pixel domain.) In~\citep{xing2021adversarially}, adversarially robust estimators are studied for the setup of linear regression and a lower bound on their statistical minimax rate is derived. The minimax rate  lower bound for sparse model is much smaller than the one for
dense model, whereby~\cite{xing2021adversarially} argues the importance of incorporating sparsity structure in improving robustness. 

The current work takes another perspective towards question (*) by considering the low-dimensional manifold structures in data. Many high-dimensional real-world data sets enjoy low-dimensional structures, and learning low-dimensional representations of raw data is a common task in information processing. In fact, the entire field of dimensionality reduction and manifold learning has been developed around this task. To give concrete examples, the MNIST database of handwritten digits consists of images of size $28\times 28$ (i.e., ambient dimension of 784), while its intrinsic (manifold)  dimension is estimated to be $\approx 14$, based on local neighborhoods of data. Likewise, the  CIFAR10 database consists of color images of size $32 \times 32$ (i.e. ambient dimension of $3,072$), but its intrinsic dimension is estimated to be $\approx 35$~\citep{costa2004learning,rozza2012novel,spigler2020asymptotic}.
The high-level message of the current work is that the low-dimensional structures in data can mitigate the tradeoff between standard accuracy and robustness, and potentially enable training models that perform gracefully (or even optimal) with respect to both measures.

\subsection{Summary of contributions} 
In this work we focus on two widely used models for binary classification, namely Gaussian-mixture model and the generalized linear model, where we also assume that the feature vectors lie on a $k$-dimensional manifold in a $d$-dimensional space ($k<d$). We consider adversarial setting with norm-bounded perturbation (in $\ell_p$ norm), for general $p\ge 2$.

We use the minimum nonzero singular value of the `lifting matrix' $W\in\reals^{d\times k}$ (between the manifold and the ambient space) as a measure of low-dimensional structure of data; cf.~\eqref{eq: gaussian-mix} and \eqref{eq: generative}. We assess the generalization property of a model through the notion of \emph{standard risk}, and its robustness against adversarial perturbation through the notion of \emph{adversarial risk} (See Section~\ref{sec:formulation} for formal definition.) Our main contributions are summarized as follows:
\begin{itemize}
\item Under both data generative models, we derive the Bayes-optimal estimators, which provably attain the minimum standard risk. We prove that as long as $\sigma_{\min}(W)$ diverges as $d\to \infty$, with a growth rate that depends on the adversary's power $\eps_p$ and the perturbation norm $\ell_p$, then the Bayes-optimal estimator asymptotically achieves the minimum adversarial risk as well. This implies that the tradeoff between robustness and generalization asymptotically disappears as data becomes more structured.
\item While the gap between the optimal standard risk and optimal adversarial risk shrinks for data with low-dimensional structure, we show that these two risk measures (as functions of estimators) stay away from each other. Specifically, we come up with an estimator for which the standard and adversarial risks remain away from each other by a constant $c>0$ independent of $k, d$.

\item In Section~\ref{sec:agnostic}, we consider an adversarial training method based on robust empirical loss minimization. While this algorithm is structure agnostic we provably show that it results in models that are robust and also generalize well. Note that the data structures (distribution), even if not deployed by the training procedure, still comes into picture as the adversarial risk and standard risk are defined with respect to this data distribution.

\item We corroborate our theoretical findings with several synthetic simulations. We also train Mixture of Factor Analyzers (MFA) models on the MNIST image dataset. This results in low-rank models from which we can generate new images. Furthermore, the Bayes-optimal classifier can be precisely computed for the MFA model. We show empirically that as the ratio of ambient dimension to the rank diverges (data becomes more structured) the gap between standard risk and adversarial risk vanishes for the Bayes-optimal classifier. In other words, Bayes-optimal estimator becomes optimal with respect to both risks.     

\end{itemize}

\subsection{Related work}    
There is a growing body of work on the tradeoff between robustness and generalization (see e.g.,~\citep{tsipras2018robustness,madry2017towards,DBLP:conf/icml/ZhangYJXGJ19,raghunathan2019adversarial,NEURIPS2020_61d77652,min2020curious,mehrabi2021fundamental}). In particular,~\cite{dobriban2020provable} consider the isotropic Gaussian-mixture model with two and three classes, and derive Bayes-optimal robust classifiers for $\ell_2$ and $\ell_\infty$ adversaries. This work proves a tradeoff between standard and robust risks which becomes bolder when the classes are imbalanced. 

The prior work~\citep{jalal2017robust,NEURIPS2018_8cea559c,stutz2019disentangling} proposed the concept of on-manifold attack, where the adversarial perturbations are done in the latent low-dimensional space. In~\citep{stutz2019disentangling}, it is argued that on-manifold adversarial examples are acting as generalization error and adversarial training against such attacks improve the generalization of the model as well. In addition, a so-called on-manifold adversarial training (based on minimax formulation) has been proposed which is similar to the adversarial training method of \cite{madry2017towards} but tailored to perturbations in the manifold space. The subsequent work~\citep{NEURIPS2020_23937b42} proposes dual manifold adversarial training (DMAT) method which considers adversarial perturbations in both the manifold and the image space to robustify models against a broader class of adversarial attacks. In this terminology, in our current work we consider out-of-manifold perturbations (in the ambient space). Also let us emphasize that~\citep{stutz2019disentangling,NEURIPS2020_23937b42} are based on empirical studies on image databases and more on an algorithmic front. The current work contributes to this literature by developing a theory for the role of manifold structure of data in the interplay between robustness and generalization, under specific binary classification setups (viz. Gaussian-mixture model and generalized linear model)

\section{Problem Formulation}\label{sec:formulation}
In this section we discuss the problem setting and formulation of this paper in greater detail. After adopting some notations, we give a brief overview of adversarial setting and describe two data generative models, namely the Gaussian mixture models (GMMs) and generalized linear models (GLMs), which incorporate latent low-dimensional manifold structure. We then conclude this section by a short background on the Bayes-optimal binary classifiers.
\smallskip

\noindent {\bf Notations.} For a matrix $W\in \reals^{d\times k} $, let $||W||$ denote its operator norm, $W^{\dagger}$ stand for the Moore–Penrose inverse, and $\sigma_{\min}(W)$ denote its smallest ``nonzero'' singular value. For a vector $x\in \reals^d$ and $p\geq1$, we define the $\ell_p$ norm $\pnorm{x}=\left(\sum_{i=1}^d x_i^p\right)^{1/p}$. 
In addition, let $B_\eps(x)$ denote the $\ell_2$-ball centered at $x$ with radius $\eps$. Throughout the paper, for two functions $f,g$ from integers to positive real numbers, we say $f(d)=o_d(g(d))$, as $d$ grows to infinity, if for every $\delta>0$, we can find a positive integer $\ell$ such that for $d\geq \ell$, we have $f(d)/g(d)\leq \delta$. In addition, let $\normal(\mu,\Sigma)$ denote the probability density of a multivariate normal distribution with mean $\mu$ and covariance $\Sigma$.   

\subsection{Adversarial setting}
In the binary classification problem, we are given a set of labeled data points $\{(x_i,y_i)\}_{i=1:n}$ which are drawn i.i.d. from a common law $\mathcal{P}$, where $x \in \cX\subset\reals^d$ is the feature vector and $y\in \{+1,-1\}$ is the label associated to the feature $x$. The goal is to predict the label of  a new test data point with a feature vector drawn from the similar population. To this end,  the learner tries to fit a binary classification model to the training set, which results in an estimated model $\widehat{h}:\cX\rightarrow \{-1,+1\} $. The conventional metric to measure the accuracy of a classifier $h$ is its average error probability on an unseen data point $(x,y)\sim \mathcal{P}$.  This is often referred to as the \textit{standard risk} of the classifier, a.k.a. generalization error. Concretely, standard risk of a classifier $h$ is defined as the following:
\begin{equation} \label{eq: SR}
\SR(h):= \prob\left(h(x)y\leq 0\right)\,.
\end{equation}
 Despite the remarkable success in deriving classifiers with high accuracy (low standard risk) during the past decades, it has been observed that even the state-of-the-art classifiers are vulnerable to minute but adversarially chosen perturbations on test data points. 
 
 The adversarial setting is often formulated as a game between the learner and the adversary. Given access to unperturbed training data, the learner fits a model $h: \cX \to \{-1,+1\}$. After observing the model $h$ and each  test data point $(x,y)$ generated from the distribution $\prob$, the adversary perturbs the data point arbitrarily as far as its within its budget.  
A common and widely-used adversarial model is that of norm-bounded perturbations, where for each test data point $(x,y)$ the adversary chooses an arbitrary perturbation $\delta$ from the $\ell_p$ ball of radius $\eps_p$ and replaces $x$ by $x+\delta$. Here, $\eps_p$ is a parameter of the setting which quantifies adversary's power.\footnote{We will drop the index $p$ and write $\eps$ for the adversary's power when it is clear from the context.}  
%
%
%
The \textit{adversarial risk} of the classifier $h$ is defined as the following:
\begin{equation}\label{eq: AR-0}
\AR(h)= \E_{(x,y)\sim\mathcal{P}}\left( \sup_{\pnorm{\delta}\le\eps_p}\ell(h(x+\delta),y) \right)\,,
\end{equation}
for some loss function $\ell$. For the 0-1 loss $\ell(s,t) = \ind(st\le 0)$, this measure amounts to

\begin{equation}\label{eq: AR}
\AR(h)= \prob\left( \inf\limits_{\pnorm{x'-x} \leq \eps_p}^{} h(x')y \leq 0  \right)\,.
\end{equation}
\begin{remark}
A couple points are worth noting regarding the adversarial setting:
\begin{itemize}
\item The adversary chooses perturbation ``after'' observing the test data point. The perturbation $\delta$ can in general depend on $x$, i.e. different data points can be perturbed differently. Therefore, in the definition~\eqref{eq: AR-0}, the supremum is taken inside the expectation.
\item In the above setting, the perturbations are added in the test time, while the learner is given access to unperturbed training data. Other adversarial setups are also studied in the literature; see e.g.~\cite{}, where an attacker can observe and modify all training data samples adversarially so as to maximize the estimation error caused by his attack.     
\item Another popular adversarial model is the so-called distribution shift. In this model, in contrast to norm bounded perturbations as discussed above, the adversary can shift the test data distribution. The adversary's power is measured  in terms of the Wasserstein distance between the test and the training distributions; see ~\cite{staib2017distributionally,dong2020adversarial,pydi2020adversarial,mehrabi2021fundamental} for a non-exhaustive list of references. That said, our focus in this paper is on the norm-bounded perturbations.
\end{itemize} 
\end{remark}

From the definition of standard risk and adversarial risk given by~\eqref{eq: SR} and \eqref{eq: AR}, it can be seen that the adversarial risk is always at least as large as the standard risk. We refer to the non-negative difference of adversarial risk and standard risk as the \textit{boundary risk}, formulated by
\begin{align}
\BR(h)&:=\AR(h)-\SR(h)\nonumber\\
&=\prob \left( h(x)y\geq 0,  \inf\limits_{\pnorm{x'-x}\leq \eps_p}^{} h(x)h(x') \leq 0 \right)\label{eq: BR}\,.
\end{align}
The boundary risk can be considered as the average vulnerability of the classifier with respect to small perturbations on successfully labeled data points. In other words, it measures the likelihood that the classifier correctly determines the label of a data point, but fails to label another test input very close to the primary data point. In the main result section, we study the boundary risk of optimal classifiers (having the lowest standard risk) in scenarios that features vectors lie on a low-dimensional manifold. We next discuss the data generative models. 

\subsection{Data generative model}\label{sec:data-model}

\noindent{\bf Latent low-dimensional manifold models.} We focus on the binary classification problem with high-dimensional features generated from a low-dimensional latent manifold. Specifically, we assume that for the features vector $x\in \reals ^d$, and the binary label $y\in \{+1,-1\}$, there exists an inherent low-dimensional link $z\in \reals^k$ such that $x\indep y|z$. This structure can be perceived as a transformed binary classification model, where low-dimensional features $z\in \reals ^k$ of a hidden classification problem with labels $y\in \{+1,-1\}$, are embedded in a high-dimensional space by a mapping $G: \reals^k\rightarrow \reals^d$. The learner observes the embedded high-dimensional features $x_i=G(z_i)$ and the primary binary labels $y_i$, while being oblivious to the low-dimensional latent vector $z_i$.



 Throughout the paper, we consider a special case of this model, where $G(z)=\vph(Wz)$ with $W\in\reals^{d\times k}$ a tall full-rank weight matrix, and $\vph$ acting entry-wise on vector inputs with a derivative $d\vph/dt\geq c$, for some positive constant $c>0$.

\bigskip

\noindent{\bf Classification settings.}
The focus of this paper is on two widely used binary classification settings: (i) Gaussian mixture models (ii) generalized linear models which we briefly explain below. 

\begin{itemize}
\item {\bf Gaussian mixture models.}
 In the Gaussian mixture model, the binary response value $y$ accepts the positive label with probability $\pi$, and the negative label with probability $1-\pi$. In this setting, labels are assigned independently from the feature vector $z$, while feature vectors are generated from a multivariate Gaussian distribution with the mean vector $y\mu$, and a certain covariance matrix.
Concretely, the data generating law for the Gaussian mixture problem with features coming from a low-dimensional manifold can be written as the following: 
\begin{equation}\label{eq: gaussian-mix}
 y\sim{\rm Bern}(\pi, \{+1,-1\}),~~  x=\vph(Wz), ~~ z\sim\normal(y\mu,I_k)\,.
\end{equation}
 In this model, we consider low-dimensional isotropic Gaussian features. In other words, manifold features $z$ are drawn from a Gaussian distribution with identity covariance matrix.   

\item {\bf Generalized linear models.}
In binary classification under a generalized linear model, there is an increasing function $f:\reals: \rightarrow [0,1]$, a.k.a. link function,  along with a linear predictor $\beta\in \reals ^k$, where the score function $f(z^\sT\beta)$ denotes the likelihood of feature vector $z$ accepting the positive label. Formally, the data generating law for this classification problem under the low-dimensional manifold model can be formulated as the following: 
\begin{equation}\label{eq: generative}
y=\begin{cases}
+1& {\rm w.p. } ~f(z^\sT\beta)\,,\\
-1& {\rm w.p.  } ~ 1-f(z^\sT\beta)\,. \\ 
\end{cases},~ x=\vph(Wz),~z\sim\normal(0,I_k)\,.
\end{equation}  
Popular choices of the link function $f$ are the logistic model $f(t)=1/(1+\exp(-t))$, and the probit model $f(t)=\Phi(t)$ with $\Phi(t)$ being the standard normal cumulative distribution function.
\end{itemize}
\subsection{Background on optimal classifiers}
For each classification setup described in the previous section, we want to identify the classifiers that are optimal with respect to the standard risk. To this end, we provide a summary of the Bayes-optimal classifiers. For a data point $(x,y)\sim \mathcal{P}$, consider the conditional distribution function $\eta(x):=\prob( y=+1|X=x)$. This distribution function can be perceived as the likelihood of assigning the positive label to a data point with feature vector $x$. The Bayes-optimal classifier simply assigns label $y=+1$ to the feature vector $x$, if for this feature there is a higher likelihood to accept the label $+1$ than $-1$. In other words, $h_{\rm Bayes}(x)=\sign(\eta(x)-1/2)$. The next proposition states the optimality of the Bayes-optimal classifier. 

\begin{propo}\label{propo: bayes}
Among all the classifiers $h: \reals^d \rightarrow \{+1,-1\}$, such that $h$ is a Borel function, the Bayes-optimal classifier $h_{\rm Bayes}(x)=\sign(\eta(x)-1/2)$ has the lowest standard risk.
 \end{propo}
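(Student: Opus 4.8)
The plan is to reduce the claim to a pointwise minimization by conditioning on the feature vector $x$. First I would observe that, because both $h(x)$ and $y$ take values in $\{+1,-1\}$, the event $\{h(x)y\le 0\}$ coincides with the misclassification event $\{h(x)\ne y\}$, so that $\SR(h)=\prob(h(x)\ne y)$. Writing this via the tower property over $x$ gives $\SR(h)=\E_x\big[g_h(x)\big]$, where $g_h(x):=\prob(h(x)\ne y\mid x)$ is the conditional error at $x$.

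Next I would compute $g_h(x)$ explicitly in terms of $\eta(x)=\prob(y=+1\mid x)$. Since $h(x)$ is a deterministic function of $x$, on the event $\{h(x)=+1\}$ an error occurs exactly when $y=-1$, and on $\{h(x)=-1\}$ exactly when $y=+1$; hence
\begin{equation*}
g_h(x)=\ind(h(x)=+1)\,(1-\eta(x))+\ind(h(x)=-1)\,\eta(x)\,.
\end{equation*}
For each fixed $x$ this is a choice between the two values $1-\eta(x)$ and $\eta(x)$, so it is minimized by taking $h(x)=+1$ when $\eta(x)>1/2$ and $h(x)=-1$ when $\eta(x)<1/2$; the two choices are equivalent when $\eta(x)=1/2$. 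This pointwise-optimal rule is precisely $h_{\rm Bayes}(x)=\sign(\eta(x)-1/2)$, with any fixed convention on the tie set $\{\eta=1/2\}$, which does not affect the risk.

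To finish, for an arbitrary Borel classifier $h$ I would form the pointwise difference $g_h(x)-g_{h_{\rm Bayes}}(x)\ge 0$, valid for every $x$ by the previous step, and integrate against the law of $x$ to obtain $\SR(h)-\SR(h_{\rm Bayes})=\E_x\big[g_h(x)-g_{h_{\rm Bayes}}(x)\big]\ge 0$. Concretely one can verify the identity $g_h(x)-g_{h_{\rm Bayes}}(x)=|2\eta(x)-1|\,\ind(h(x)\ne h_{\rm Bayes}(x))$, which makes the nonnegativity manifest and exhibits where the gap comes from.

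The algebra above is routine; the only point requiring genuine care is measurability. I would need $\eta$ to admit a Borel version, so that $h_{\rm Bayes}$ is itself an admissible Borel classifier and the comparison stays within the stated class; granting this, minimizing the integrand pointwise is legitimate because each $g_h$ is nonnegative and the pointwise minimizer $h_{\rm Bayes}$ is measurable, so no measurable-selection subtlety beyond a Borel version of $\eta$ arises. The tie set $\{\eta(x)=1/2\}$ is the other delicate spot, but it is harmless since both labels incur identical conditional error there.
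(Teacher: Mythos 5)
Your proposal is correct and follows essentially the same route as the paper's proof: both condition on $x$, write $\SR(h)=\E_x\bigl[\ind(h(x)=+1)(1-\eta(x))+\ind(h(x)=-1)\eta(x)\bigr]$, and conclude by a pointwise comparison, with your identity $g_h(x)-g_{h_{\rm Bayes}}(x)=|2\eta(x)-1|\,\ind(h(x)\ne h_{\rm Bayes}(x))\ge 0$ being exactly the paper's inequality $(2\eta(x)-1)\bigl(\ind(x\in A^c)-\ind(\eta(x)\le 1/2)\bigr)\ge 0$ in disguise. The only stylistic difference is that you phrase the argument as pointwise minimization of the conditional error, while the paper subtracts the two risk expressions and invokes that indicator inequality; the mathematical content is identical.
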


Proof of Proposition \ref{propo: bayes} is provided in Section~\ref{sec:proofs}. The next corollary uses Proposition \ref{propo: bayes} to characterize the Bayes-optimal classifier under each of the binary classification settings described earlier in Section \ref{sec:data-model}.
\begin{coro}\label{coro: optimals}
 Under the Gaussian mixture model \eqref{eq: gaussian-mix}, the Bayes-optimal classifier can be formulated by
$$h^*(x)=\sign\left(\vph^{-1}(x)^\sT\left(WW^\sT\right)^{\dagger}W\mu-q/2\right)\,,$$
with $q=\log(\frac{1-\pi}{\pi})$. Moreover, under the generalized linear model \eqref{eq: generative}, the Bayes-optimal classifier is given by 
\begin{equation*}
h^*(x)=\sign\left(f\left(\beta^{\sT}(W^\sT W)^{-1}W^{\sT}\vph^{-1}(x)\right)-1/2\right)\,.
\end{equation*}
\end{coro}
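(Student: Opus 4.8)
The plan is to invoke Proposition~\ref{propo: bayes}, which tells us that the Bayes-optimal classifier has the form $h^*(x)=\sign(\eta(x)-1/2)$ where $\eta(x)=\prob(y=+1\mid X=x)$. So the entire task reduces to computing the conditional probability $\eta(x)$ explicitly under each of the two generative models~\eqref{eq: gaussian-mix} and~\eqref{eq: generative}, and then showing that the decision rule $\eta(x)\gtrless 1/2$ is equivalent to the sign condition stated in the corollary. A key preliminary observation is that since $x=\vph(Wz)$ with $\vph$ strictly monotone (derivative $\geq c>0$), the map $\vph$ is invertible entrywise, so $\vph^{-1}(x)=Wz$ is deterministically recoverable from $x$. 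Thus conditioning on $X=x$ is equivalent to conditioning on $Wz = \vph^{-1}(x)$, and the problem is essentially reduced to a linear-algebraic inversion of the latent variable $z$ from the observed quantity $Wz$.

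For the \textbf{Gaussian mixture model}, I would proceed as follows. Write $u:=\vph^{-1}(x)=Wz$. Conditioned on the label $y$, we have $z\sim\normal(y\mu,I_k)$, so $u=Wz\sim\normal(yW\mu, WW^\sT)$ is a (degenerate, if $d>k$) Gaussian supported on the column space of $W$. The likelihood ratio is
\begin{equation*}
\frac{\prob(y=+1\mid u)}{\prob(y=-1\mid u)}=\frac{\pi}{1-\pi}\cdot\frac{\exp\!\left(-\tfrac12 (u-W\mu)^\sT (WW^\sT)^\dagger (u-W\mu)\right)}{\exp\!\left(-\tfrac12 (u+W\mu)^\sT (WW^\sT)^\dagger (u+W\mu)\right)}\,,
\end{equation*}
using the pseudo-inverse $(WW^\sT)^\dagger$ as the precision on the support. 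The decision rule $\eta(x)\geq 1/2$ is equivalent to this ratio being $\geq 1$, i.e. to its logarithm being nonnegative. Expanding the quadratic forms, the quadratic-in-$u$ terms cancel and the cross terms combine to leave a linear expression: one obtains $2u^\sT(WW^\sT)^\dagger W\mu + \log\frac{\pi}{1-\pi}\geq 0$, which after substituting $u=\vph^{-1}(x)$ and dividing through yields exactly $\vph^{-1}(x)^\sT(WW^\sT)^\dagger W\mu - q/2\geq 0$ with $q=\log\frac{1-\pi}{\pi}$. Taking the sign recovers the stated $h^*$.

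For the \textbf{generalized linear model}, the link between $u=Wz$ and the score $f(z^\sT\beta)$ is less immediate because the response probability depends on $z$ only through the scalar $z^\sT\beta$, and I must express this in terms of the observable $u$. Here I would use that $W$ has full column rank, so $z$ is exactly recoverable from $u=Wz$ via the left inverse: $z=(W^\sT W)^{-1}W^\sT u=(W^\sT W)^{-1}W^\sT\vph^{-1}(x)$. Substituting into $f(z^\sT\beta)=f(\beta^\sT z)$ gives $\eta(x)=f\!\left(\beta^\sT(W^\sT W)^{-1}W^\sT\vph^{-1}(x)\right)$, and since $f$ is increasing with range in $[0,1]$, the condition $\eta(x)\geq 1/2$ is equivalent to the argument of $f$ exceeding the value $f^{-1}(1/2)$; I should double-check the implicit normalization that $f^{-1}(1/2)=0$ (true for both the logistic and probit examples, and evidently the convention adopted here), which delivers the stated sign expression.

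I expect the main obstacle to be the careful bookkeeping in the Gaussian case when $d>k$: the covariance $WW^\sT$ is singular, so I must work with the degenerate Gaussian density on the affine support and justify that $(WW^\sT)^\dagger$ is the correct precision operator, verifying that both mean-shifted Gaussians $\normal(\pm W\mu, WW^\sT)$ share the same support (the column space of $W$, since $W\mu$ lies in it) so that the likelihood ratio is well-defined and the quadratic cancellation goes through cleanly. The GLM case is comparatively routine once the recoverability of $z$ from $x$ is established, the only subtlety being the normalization convention on $f$ noted above.
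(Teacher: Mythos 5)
Your proposal is correct and follows essentially the same route as the paper's proof: invoke Proposition~\ref{propo: bayes}, then compute $\eta(x)$ explicitly under each model---via the degenerate Gaussian density with pseudo-inverse precision $(WW^\sT)^\dagger$ and quadratic cancellation for the mixture model, and via the left inverse $z=(W^\sT W)^{-1}W^\sT\vph^{-1}(x)$ for the GLM. One small correction: no normalization $f^{-1}(1/2)=0$ is needed (and none is assumed in the paper), because the corollary applies $\sign$ to $f(\cdot)-1/2$ itself, i.e.\ to $\eta(x)-1/2$, so once you establish $\eta(x)=f\left(\beta^\sT(W^\sT W)^{-1}W^\sT\vph^{-1}(x)\right)$ the stated formula follows verbatim from Proposition~\ref{propo: bayes} for any increasing link $f$.
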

It is worth noting that in the described manifold latent model of Section \ref{sec:data-model}, the weight matrix $W$ is tall and full-rank, and $\vph$ is strictly increasing hence both $W^\sT W$  and $\vph$ are invertible. 
\section{Main results}

We will focus on the described binary classification settings of Section \ref{sec:data-model}. In each setting, we characterize the asymptotic behavior of the associated boundary risk of Bayes-optimal classifiers, when the ambient dimension $d$ grows to infinity. We aim at studying the role of low-dimensional latent structure of data in obtaining a vanishing boundary risk for the Bayes-optimal classifiers. In this case, the Bayes-optimal classifiers are optimal with respect to both measures of standard accuracy and the robust accuracy.
 
\subsection{Gaussian mixture model}
Consider the Gaussian mixture model with features lying on a low-dimensional manifold, cf. \eqref{eq: gaussian-mix}. Recall that the learner only observes the ambient $d-$dimensional features $x$, and is oblivious to the original $k$-dimensional manifold features $z$. The next result states that under this setup, the boundary risk of the Bayes-optimal classifier will converge to zero, when the  minimum nonzero singular value of the weight matrix $W$ grows at a sufficient rate, which depends on adversary's power $\eps_p$ and the choice of perturbations norm $\ell_p$. 

\begin{thm}\label{thm: Gaussian-mix} Consider the binary classification problem under the Gaussian mixture model \eqref{eq: gaussian-mix} in the presence of an adversary with $\ell_p$-norm bounded adversary power $\eps_p$, for $p\geq 2$. By letting the ambient dimension $d$ grow to infinity, under the condition that the weight matrix $W$ satisfies
\begin{align}\label{eq:W-cond}
\frac{\eps_pd^{\frac{1}{2}-\frac{1}{p}}}{\sigma_{\min}(W)}=o_d(1)\,,
\end{align}
  the boundary risk of the Bayes-optimal classifier converges to zero.
\end{thm}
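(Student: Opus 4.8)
The plan is to reduce the boundary risk to a one-dimensional anti-concentration estimate for the classifier's score, controlled against the worst-case change of that score under an $\ell_p$-bounded perturbation.

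First I would put the Bayes classifier in a transparent form. Write $g(x):=\vph^{-1}(x)^\sT(WW^\sT)^\dagger W\mu-q/2$, so that $h^*(x)=\sign(g(x))$. On the data manifold $\vph^{-1}(x)=Wz$, and since $W$ is tall and full rank the identity $(WW^\sT)^\dagger W=W(W^\sT W)^{-1}$ holds; hence $g(x)=z^\sT\mu-q/2$ is exactly the latent Bayes score. Conditional on $y$ it is Gaussian, $g(x)\mid y\sim\normal(y\twonorm{\mu}^2-q/2,\,\twonorm{\mu}^2)$, so its marginal density is bounded by $1/(\sqrt{2\pi}\twonorm{\mu})$.

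Because $g$ is continuous, a perturbation can flip the predicted label only when $g(x)$ and $g(x')$ have opposite signs, which forces $|g(x)|\le\Delta$ with $\Delta:=\sup_{\pnorm{x'-x}\le\eps_p}|g(x')-g(x)|$. Discarding the (only restrictive) correct-classification event, this gives $\BR(h^*)\le\prob(|g(x)|\le\Delta)$. To bound $\Delta$, set $a:=W(W^\sT W)^{-1}\mu$, so that $g(x')-g(x)=\langle\vph^{-1}(x')-\vph^{-1}(x),a\rangle$. The hypothesis $d\vph/dt\ge c$ makes $\vph^{-1}$ entrywise $1/c$-Lipschitz, whence $\pnorm{\vph^{-1}(x')-\vph^{-1}(x)}\le\eps_p/c$; Hölder's inequality ($1/p+1/q=1$) then yields $\Delta\le(\eps_p/c)\qnorm{a}$. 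Using $\qnorm{a}\le d^{1/2-1/p}\twonorm{a}$ (valid for $q\le2$, i.e. $p\ge2$) together with $\twonorm{a}^2=\mu^\sT(W^\sT W)^{-1}\mu\le\twonorm{\mu}^2/\sigma_{\min}(W)^2$ gives $\Delta\le\frac{\twonorm{\mu}}{c}\cdot\frac{\eps_p d^{1/2-1/p}}{\sigma_{\min}(W)}$.

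Combining the two steps with the density bound produces $\BR(h^*)\le 2\Delta/(\sqrt{2\pi}\twonorm{\mu})\le\frac{2}{\sqrt{2\pi}\,c}\cdot\frac{\eps_p d^{1/2-1/p}}{\sigma_{\min}(W)}$, where the scale $\twonorm{\mu}$ conveniently cancels (so no extra normalization of $\mu$ is needed), and this vanishes under \eqref{eq:W-cond}. I expect the main obstacle to be the uniform-over-the-adversary control of $\Delta$ and, in particular, producing the exact exponent $d^{1/2-1/p}$: it arises from the $\ell_q$-to-$\ell_2$ norm conversion (which is precisely where $p\ge2$ enters) combined with bounding $\twonorm{a}$ through $\sigma_{\min}(W)$.
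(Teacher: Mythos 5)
Your proof is correct and follows essentially the same route as the paper's: both reduce the boundary risk to the probability that a Gaussian score falls in an interval of width $O\left(\eps_p d^{\frac{1}{2}-\frac{1}{p}}\twonorm{\mu}/(c\,\sigma_{\min}(W))\right)$ and then apply the $1/(\sqrt{2\pi}\sigma)$ Gaussian density bound. The only differences are cosmetic: the paper converts the $\ell_p$ ball into an $\ell_2$ ball of radius $\eps_p d^{\frac{1}{2}-\frac{1}{p}}$ up front and works with the ambient score $\vph^{-1}(x)^\sT (WW^\sT)^\dagger W\mu$, whereas you stay in $\ell_p$, use $\ell_p$--$\ell_q$ duality, and simplify to the latent score $z^\sT\mu-q/2$ via the pseudoinverse identity --- the same quantities in slightly different clothing.
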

The proof of Theorem \ref{thm: Gaussian-mix} is given in Section~\ref{proof: thm: Gaussian-mix}.

We proceed by discussing condition~\eqref{eq:W-cond}. As $\eps_p$ gets larger, the condition becomes more strict which is expected; larger value of $\eps_p$ indicates a stronger adversary which makes the boundary risk larger. In addition, $\sigma_{\min}(W)$ somewhat measures the extent of low-dimensional structure in data; small $\sigma_{\min}(W)$ indicates that there are directions in the low-dimensional space along which the energy of the signal is not scaled sufficiently large when transformed into the ambient space. Therefore, the adversary can perturb feature $x$ along those dimensions as the existent signal is weak. Finally, since $\pnorm{\delta}\le \twonorm{\delta}$ for $p\ge 2$, an adversary with power $\eps$ in $\ell_p$ norm is stronger than an adversary with power $\eps$ in $\ell_2$ norm. This is consistent with the fact that $d^{\frac{1}{2}-\frac{1}{p}}$ is increasing in $p$ and so the condition becomes stronger for larger $p$.

\bigskip

\noindent{\bf Example.}
Consider the case of $\varphi(\cdot)$ being the identity function and $p=2$. We observe that for feature $x$ with label $y$, $x_i \sim \normal(yw_i^\sT\mu, \twonorm{w_i}^2)$. To be definite, we fix $\twonorm{w_i}=1$, which implies in particular $\fronorm{W}^2 = d$. To simplify further, we assume that all the non-zero
singular values of $W$ to be equal, whence  $W^\sT W = (d/k) I$. In this case, condition~\eqref{eq:W-cond} reduces to $\eps_2 = o(\sqrt{d/k})$. In particular, if $\eps_2 = O(1)$ and the dimension ratio $d/k \to \infty$ the boundary risk converges to zero.



Figure \ref{fig:gaussian}  validates the result of Theorem \ref{thm: Gaussian-mix} under the Gaussian mixture model \eqref{eq: gaussian-mix} with $\pi=1/2$, $\mu=\normal(0,I_k/k)$, in the presence of an adversary with $\ell_2$ bounded adversarial attacks of power $\eps_2$. In this example, we fix the high-dimensional feature dimension $d=300$, and vary the dimensions ratio $d/k$ from $1$ to $300$. Further, we consider the identical function $\vph(t)=t$, and let the feature matrix $W$ have independent Gaussian entries $\normal(0,1/k)$.  Figure \ref{fig:gaussian-1} shows the effect of dimensions ratio $d/k$ on the standard risk, adversarial risk, and the boundary risk of the Bayes-optimal classifier. For each fixed values $(k,d)$, we generate $M=100$ independent realizations and compute the risks. The shaded area around each curve denotes one standard deviation (computed over $M$ realizations) above and below the average curve. As it can be seen, the boundary risk will eventually converge to zero. Finally, in Figure \ref{fig:gaussian-2}, we consider several values for adversary's power, where it can be observed that for all adversary's power $\eps_2$, the boundary risk decays to zero as the feature dimensions $d/k$ grows.

\begin{figure*}
	\centering
	\begin{subfigure}[t]{0.47\textwidth}
		\centering
		\includegraphics[scale=0.4]{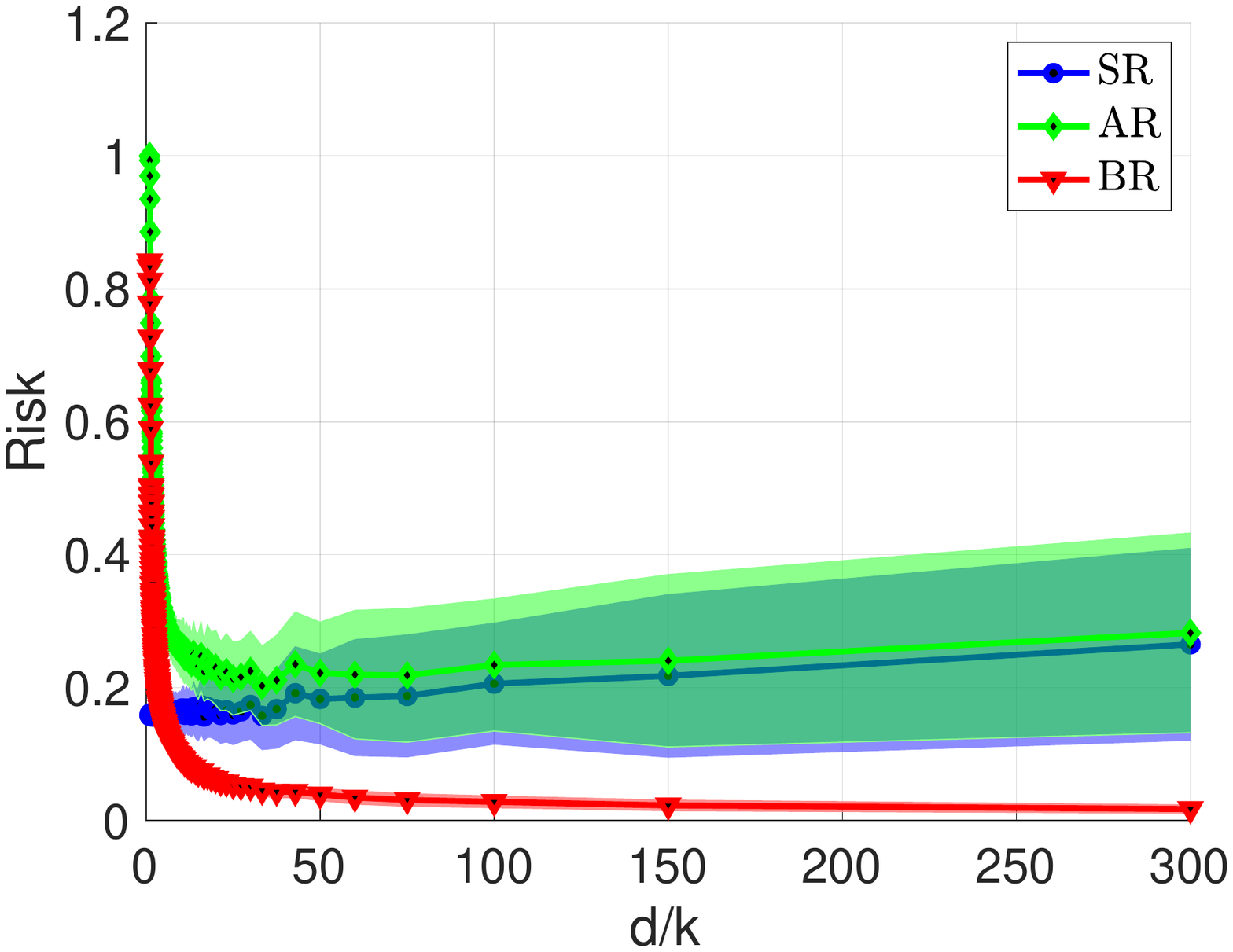}
		\caption{Behavior of the standard and  adversarial risks of the Bayes-optimal classifier for the adversary's power $\eps_2=1$.}
		\label{fig:gaussian-1}
	\end{subfigure}%
	\hfill
	\begin{subfigure}[t]{0.47\textwidth}
		\centering
		\includegraphics[scale=0.4]{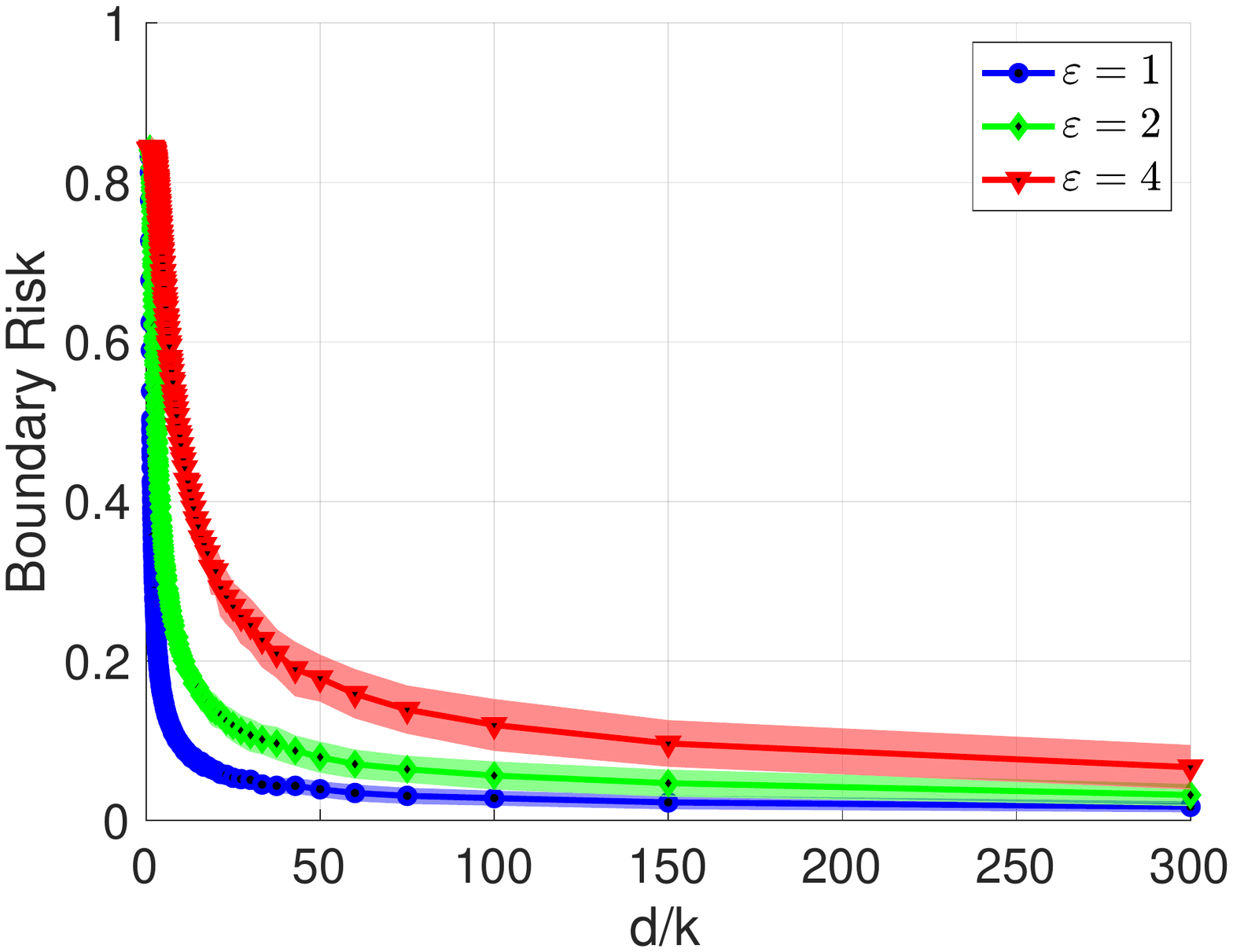}
		\caption{Behavior of the boundary risk of the Bayes-optimal classifier for the several values of adversary's power $\eps_2$.}
		\label{fig:gaussian-2}
	\end{subfigure}
	\caption{Effect of the dimensions ratio $d/k$ on the standard, adversarial, and the boundary risk of the Bayes-optimal classifier with $\ell_2$ perturbations under the Gaussian mixture model \eqref{eq: gaussian-mix}, where features lie on a low-dimensional manifold.
Solid curves represent the average values, and the shaded area around each curve represents one standard deviation above and below the computed average curve over the $M=100$ realizations.}
	\label{fig:gaussian}
\end{figure*}

In Theorem \eqref{thm: Gaussian-mix}, we showed that when features lie on a low-dimensional manifold, the Bayes-optimal classifier is also optimal with respect to the adversarial risk. In other words, the adversarial risk is always at least as large as the standard risk, for any classifier, and the gap between the ``minimum'' of these two risks converges to zero.  This result raises the blow natural question:

\emph{``Does the boundary risk of \emph{any} classifier vanish under the low-dimensional latent structure?''}

In the next proposition, we provide a simple example to show that such behavior (vanishing boundary risk) does not necessarily happen for all classifiers.

\begin{propo}\label{propo: example}
Consider the Gaussian mixture model \eqref{eq: gaussian-mix} with $\mu$ having i.i.d. $\normal(0,1/k)$ entries with class probability $\pi=1/2$ in the presence of an adversary with bounded $\ell_p$ perturbations of size $\eps_p$.  In addition, suppose that the rows of the feature matrix $W$ are sampled from the $k$-dimensional unit sphere and consider $\vph$ being the identity function.  Then, the boundary risk of the classifier $h(x)=\sign(e_1^\sT x)$ with $e_1=(1,0,0,...,0)$  is lower bounded by some constant $c_{\eps_p}$, where $c_{\eps_p}$ depends only on $\eps_p$ (independent of dimensions $k,d$), and is strictly positive for positive values of $\eps_p$.
\end{propo}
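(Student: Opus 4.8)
The plan is to reduce the high-dimensional boundary risk to a one-dimensional Gaussian computation, exploiting that $h(x)=\sign(e_1^\sT x)=\sign(x_1)$ reads only the first coordinate of $x$. With $\vph$ the identity we have $x=Wz$, so $x_1=w_1^\sT z$ where $w_1$ is the first row of $W$. Since $z\sim\normal(y\mu,I_k)$ and $\twonorm{w_1}=1$ (rows on the unit sphere), conditionally on $y$ and $\mu$ we get $x_1\sim\normal(y\,m,1)$ with $m:=w_1^\sT\mu$. Because $\mu$ has i.i.d.\ $\normal(0,1/k)$ entries, $m\sim\normal(0,1/k)$ for any fixed unit $w_1$, hence also marginally.

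Next I would characterize exactly when the adversary flips the label. Since $h$ depends on $x$ only through $x_1$, and for any $\delta$ with $\pnorm{\delta}\le\eps_p$ one has $|\delta_1|\le\pnorm{\delta}\le\eps_p$ (with $\delta=\pm\eps_p e_1$ attaining $\delta_1=\pm\eps_p$), the adversary can move $x_1$ to any point of $[x_1-\eps_p,\,x_1+\eps_p]$ and no further --- this holds for every $p\ge 1$. Consequently the sign of $x_1$ can be flipped if and only if $|x_1|\le\eps_p$. Combining this with the correct-classification event $h(x)y\ge 0$, i.e.\ $x_1y>0$ up to the null set $\{x_1=0\}$, formula \eqref{eq: BR} becomes $\BR(h)=\prob(x_1y>0,\ |x_1|\le\eps_p)$.

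Then comes the explicit computation. Conditioning on $(y,m)$ and using $x_1\sim\normal(ym,1)$, the $y=+1$ branch contributes $\prob(0<x_1\le\eps_p)=\Phi(\eps_p-m)-\Phi(-m)$ and the $y=-1$ branch $\prob(-\eps_p\le x_1<0)=\Phi(m)-\Phi(m-\eps_p)$; averaging with $\pi=1/2$ and simplifying via $\Phi(-t)=1-\Phi(t)$ yields the clean expression $\BR(h\mid m)=\Phi(\eps_p-m)+\Phi(m)-1$. Integrating over $m\sim\normal(0,1/k)$, symmetry gives $\E[\Phi(m)]=1/2$, while the Gaussian convolution identity $\E[\Phi(\eps_p-m)]=\prob(Z+m\le\eps_p)$ with $Z\sim\normal(0,1)$ independent of $m$ gives $\Phi\bigl(\eps_p/\sqrt{1+1/k}\bigr)$, so
$$\BR(h)=\Phi\left(\frac{\eps_p}{\sqrt{1+1/k}}\right)-\frac12\,.$$

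Finally, the uniform lower bound is immediate: for every $k\ge 1$ we have $\sqrt{1+1/k}\le\sqrt2$, so monotonicity of $\Phi$ gives $\BR(h)\ge\Phi(\eps_p/\sqrt2)-1/2=:c_{\eps_p}$, which depends only on $\eps_p$ and is strictly positive whenever $\eps_p>0$. I do not anticipate a serious obstacle here; the only points requiring care are (i) arguing that the effective perturbation budget on the single coordinate $x_1$ equals $\eps_p$ irrespective of $p$, and (ii) the conditioning/convolution step that collapses the ambient randomness into the scalar $m$. The conceptual content is that this fixed classifier ignores the manifold geometry entirely, so its boundary risk cannot benefit from a large $\sigma_{\min}(W)$ --- in sharp contrast to the Bayes-optimal classifier of Theorem~\ref{thm: Gaussian-mix}.
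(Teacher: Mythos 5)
Your proof is correct, and it follows the same skeleton as the paper's: reduce to the first coordinate $x_1=w_1^\sT z$, observe the adversary can move $x_1$ by exactly $\eps_p$, condition on $(y,m)$ with $m=w_1^\sT\mu\sim\normal(0,1/k)$, and express the boundary risk through $\Phi$. The difference is in the finishing step. The paper stops at the expectation form $\BR(h)=\E_{z\sim\normal(0,1/k)}\left[\Phi(\eps_p+z)-\Phi(z)\right]$ and then invokes a separate monotonicity lemma (that $\sigma\mapsto\E_{z\sim\normal(0,1)}[\Phi(\eps_p+\sigma z)-\Phi(\sigma z)]$ is nonincreasing, proved by differentiating under the integral), concluding $\BR(h)\ge\E_{z\sim\normal(0,1)}[\Phi(\eps_p+z)-\Phi(z)]=:c_{\eps_p}$. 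You instead evaluate the expectation exactly via the Gaussian convolution identity, obtaining the closed form $\BR(h)=\Phi\bigl(\eps_p/\sqrt{1+1/k}\bigr)-1/2$, and the lower bound then follows from the elementary inequality $1+1/k\le 2$. This buys you two things: an exact formula rather than a bound (so the dependence on $k$ is fully explicit), and you dispense with the paper's auxiliary lemma entirely. Note also that the two constants agree: applying your convolution identity to the paper's expression gives $c_{\eps_p}=\Phi(\eps_p/\sqrt{2})-1/2$, identical to yours. A final small credit: your justification that the adversary's reach on the single coordinate is exactly $[x_1-\eps_p,\,x_1+\eps_p]$ for every $p\ge 1$ (via $|\delta_1|\le\pnorm{\delta}$ with equality at $\delta=\pm\eps_p e_1$) is spelled out, whereas the paper asserts the optimal perturbation is ``easy to obtain''; making this explicit is worthwhile since the statement covers all $\ell_p$ adversaries.
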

We refer to Section~\ref{sec:proofs} for proof of this proposition.
  


\subsection{ Binary classification under generalized linear models}
Consider a binary classification problem under a generalized linear model with features enjoying a low-dimensional latent structure, cf. \eqref{eq: generative}. 
The next result states that under certain conditions on the weight matrix, the boundary risk of the Bayes-optimal classifier will converge to zero, as the ambient dimension grows to infinity.

\begin{thm}\label{thm: generative_new}

Consider the binary classification problem under the generalized linear model \eqref{eq: generative} in the presence of an  adversary with $\ell_p$-bounded perturbations of power $\eps_p$ for some $p\geq 2$.  
Assume that as the ambient dimension $d$ grows to infinity, the weight matrix $W$ satisfies the following condition:
\[
\frac{\eps_p d^{\frac{1}{2}-\frac{1}{p}}}{\sigma_{\min}(W)}=o_d(1)\,.
\]
 Then the boundary risk of the Bayes-optimal classifier converges to zero. 
\end{thm}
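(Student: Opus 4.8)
The plan is to mirror the proof of Theorem~\ref{thm: Gaussian-mix}, exploiting the fact that on the data manifold the Bayes discriminant collapses to a one-dimensional Gaussian margin. By Corollary~\ref{coro: optimals}, $h^*(x)=\sign(g(x))$ with $g(x):=\beta^\sT(W^\sT W)^{-1}W^\sT\vph^{-1}(x)-\theta_0$, where $\theta_0:=f^{-1}(1/2)$; here I use that $f$ is (strictly) increasing, so that $\sign(f(u)-1/2)=\sign(u-\theta_0)$. The key observation is that for a clean feature $x=\vph(Wz)$ one has $\vph^{-1}(x)=Wz$, and since $(W^\sT W)^{-1}W^\sT W=I_k$ the discriminant reduces to $g(x)=\beta^\sT z-\theta_0$. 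As $z\sim\normal(0,I_k)$, the margin $g(x)$ is a scalar Gaussian with mean $-\theta_0$ and variance $\twonorm{\beta}^2$, so its density is bounded everywhere by $(\sqrt{2\pi}\,\twonorm{\beta})^{-1}$.

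Next I would control how far an admissible perturbation can move the discriminant. Setting $v:=W(W^\sT W)^{-1}\beta$, a thin SVD $W=U\Sigma V^\sT$ gives $v=U\Sigma^{-1}V^\sT\beta$ and hence $\twonorm{v}=\twonorm{\Sigma^{-1}V^\sT\beta}\le \twonorm{\beta}/\sigma_{\min}(W)$. Since $\vph$ has derivative at least $c>0$, its inverse $\vph^{-1}$ is coordinatewise $(1/c)$-Lipschitz, whence $\twonorm{\vph^{-1}(x+\delta)-\vph^{-1}(x)}\le c^{-1}\twonorm{\delta}$. Combining this with the elementary bound $\twonorm{\delta}\le d^{\frac12-\frac1p}\pnorm{\delta}$ (valid for $p\ge2$) and $\pnorm{\delta}\le\eps_p$ yields the uniform estimate
\begin{equation*}
\sup_{\pnorm{\delta}\le\eps_p}\bigl|g(x+\delta)-g(x)\bigr|\;\le\;\twonorm{v}\,c^{-1}\twonorm{\delta}\;\le\;\frac{\twonorm{\beta}}{c}\cdot\frac{\eps_p\,d^{\frac12-\frac1p}}{\sigma_{\min}(W)}\;=:\;\gamma_d\,.
\end{equation*}

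Finally I would assemble the boundary-risk bound. If some $x'$ in the $\ell_p$-ball around $x$ satisfies $h^*(x)h^*(x')\le0$, then $g(x)$ and $g(x')$ have opposite signs, forcing $|g(x)|\le|g(x)-g(x')|\le\gamma_d$. Dropping the first (correct-classification) event in~\eqref{eq: BR} therefore gives $\BR(h^*)\le\prob(|g(x)|\le\gamma_d)$, and the Gaussian density bound yields
\begin{equation*}
\prob\bigl(|g(x)|\le\gamma_d\bigr)\;\le\;\frac{2\gamma_d}{\sqrt{2\pi}\,\twonorm{\beta}}\;=\;\frac{2}{c\sqrt{2\pi}}\cdot\frac{\eps_p\,d^{\frac12-\frac1p}}{\sigma_{\min}(W)}\;=\;o_d(1)\,,
\end{equation*}
where the scale $\twonorm{\beta}$ conveniently cancels, so no boundedness assumption on $\beta$ is required and the hypothesis closes the argument.

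The only genuine subtlety is that the perturbed point $x+\delta$ generally leaves the manifold, so that $\vph^{-1}(x+\delta)$ is no longer of the form $W(\cdot)$; this is precisely why I estimate $g(x+\delta)-g(x)$ through the ambient Lipschitz and operator-norm bounds rather than through the latent coordinates. A minor technical point is to guarantee that $\vph^{-1}$ is defined on a neighborhood of the manifold, so that $h^*$ (and hence $\AR(h^*)$) is well-defined on the whole perturbation ball; this is automatic when $\vph$ is onto $\reals$, e.g.\ the identity link. Apart from these points the computation is a routine repetition of the Gaussian-mixture case.
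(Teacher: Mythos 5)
Your proposal is correct and follows essentially the same route as the paper's proof: both reduce the sign of $f(\cdot)-1/2$ to a linear Gaussian margin via monotonicity of $f$, control the adversary's effect through the $(1/c)$-Lipschitz property of $\vph^{-1}$, the $\ell_p$-to-$\ell_2$ comparison $\twonorm{\delta}\le d^{\frac12-\frac1p}\pnorm{\delta}$, and the bound $\twonorm{W(W^\sT W)^{-1}\beta}\le\twonorm{\beta}/\sigma_{\min}(W)$, and finish with the Gaussian density (equivalently, Lipschitz CDF) bound, arriving at the identical estimate $\BR(h^*)\le \frac{2}{c\sqrt{2\pi}}\cdot\frac{\eps_p d^{\frac12-\frac1p}}{\sigma_{\min}(W)}$. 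The only cosmetic difference is that you bound the margin deviation $|g(x+\delta)-g(x)|$ uniformly by Cauchy--Schwarz, whereas the paper solves the inner $\inf/\sup$ over the $\ell_2$ ball in closed form; these yield the same quantity.
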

The proof of Theorem \ref{thm: generative_new} is given in Section~\ref{sec:proofs}.

\begin{figure*}[t]
	\centering
	\begin{subfigure}[t]{0.47\textwidth}
		\centering
		\includegraphics[scale=0.4]{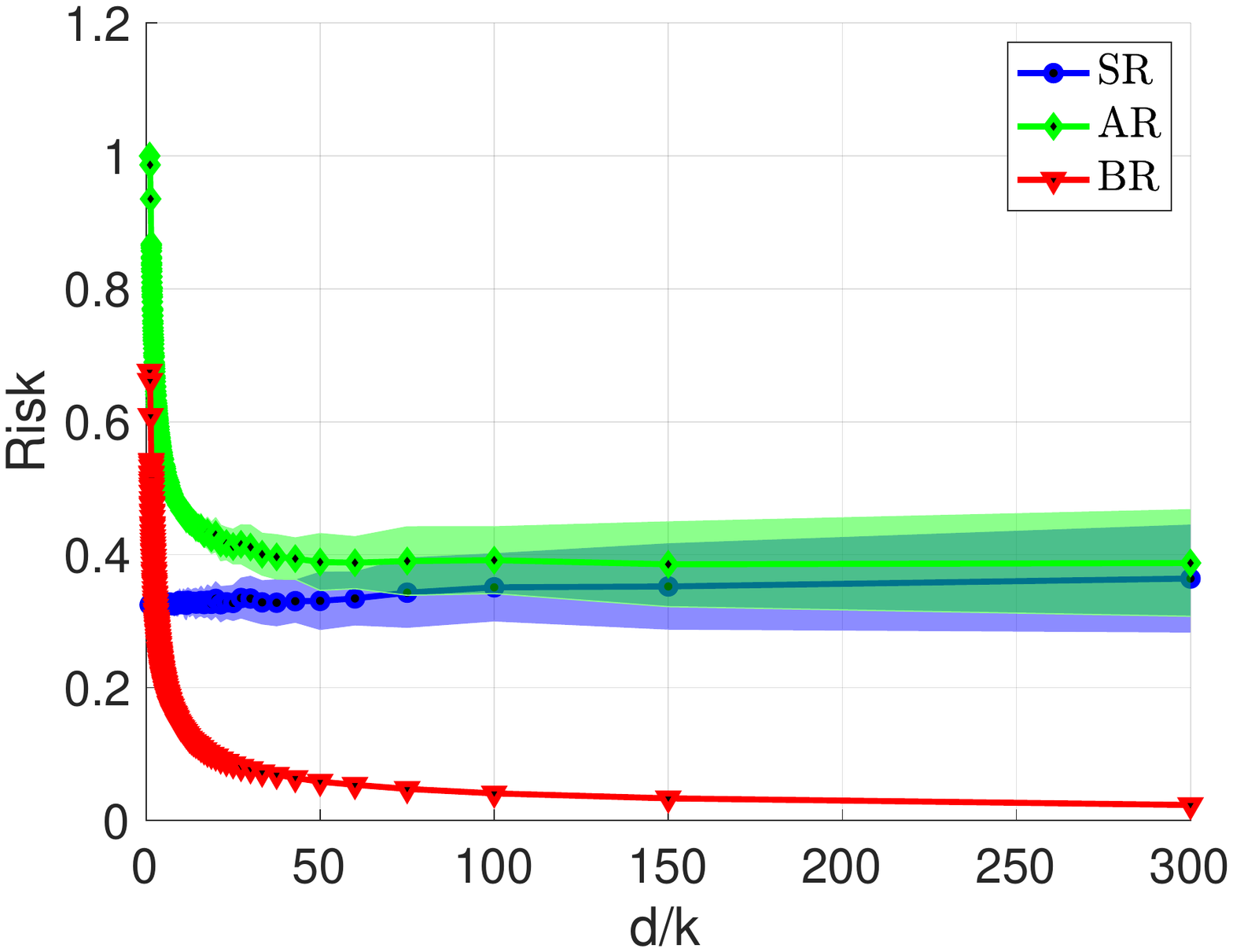}
		\caption{Behavior of the standard and  adversarial risks of the Bayes-optimal classifier for the adversary's power $\eps_2=1$. }
		\label{fig:logistic-1}
	\end{subfigure}%
	\hfill
	\begin{subfigure}[t]{0.47\textwidth}
		\centering
		\includegraphics[scale=0.4]{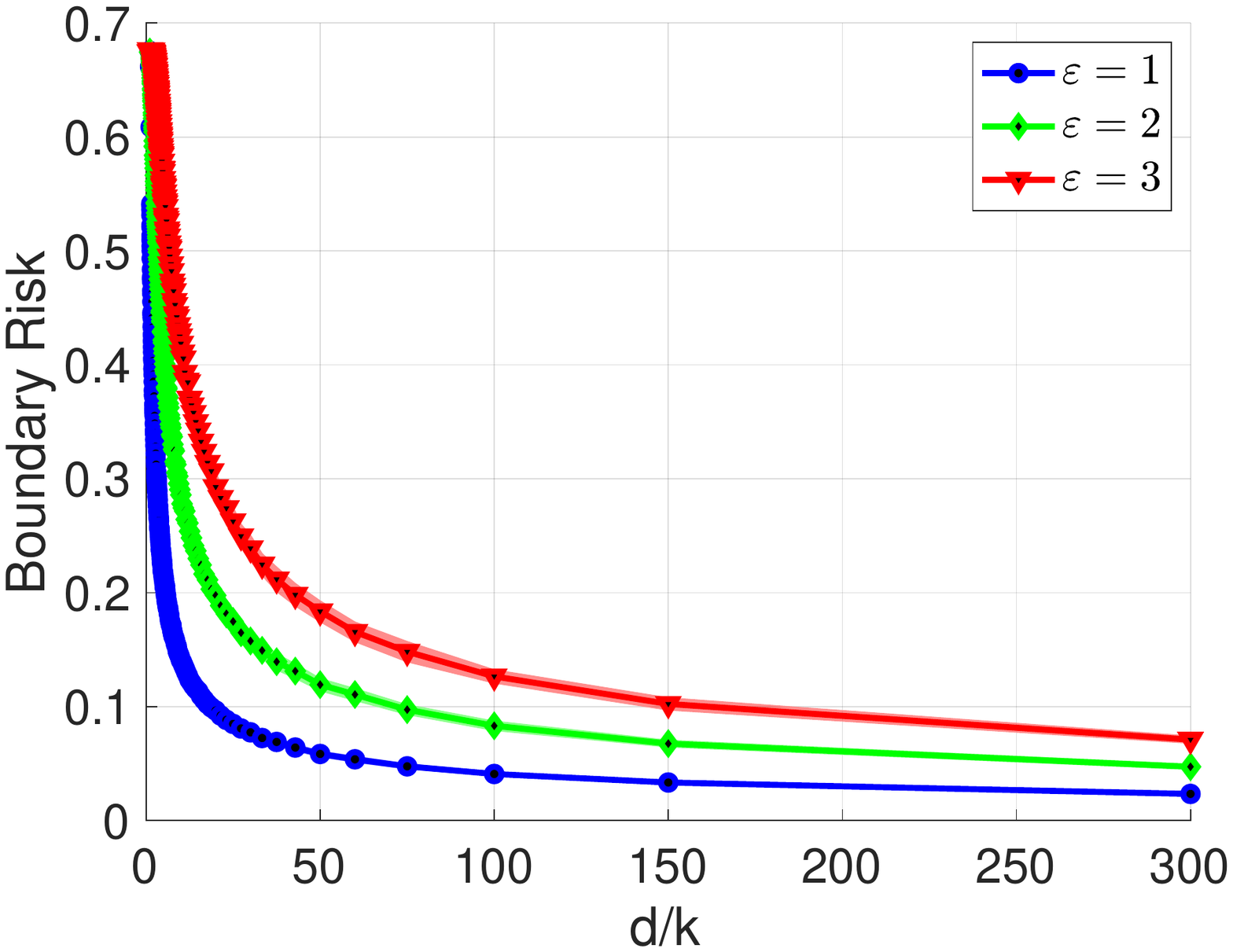}
		\caption{Behavior of the boundary risk of the Bayes-optimal classifier for the several values of adversary's power $\eps_2$.}
		\label{fig:logistic-2}
	\end{subfigure}
	\caption{Effect of the dimensions ratio $d/k$ on the standard, adversarial, and the boundary risk of the Bayes-optimal classifier of the generalized linear model \eqref{eq: generative} with $\ell_2$ perturbations, in which features are coming from a low-dimensional manifold. 
Solid curves represent the average values, and the shaded areas represent one standard deviation above and below the corresponding curves over $M=100$ realizations.}
	\label{fig:logistic}
\end{figure*}

Figure \ref{fig:logistic} validates the result of Theorem \ref{thm: generative_new} for binary classification under the generalized linear model \eqref{eq: generative} with identity $\vph$ mapping and $\ell_2$ perturbations. In this example, the ambient dimension $d$ is fixed at $300$, and the manifold dimension $k$ varies from $1$ to $300$. In addition, the linear predictor $\beta$ and the weight matrix $W$ have i.i.d.  $\normal(0,1/k)$ entries. For each fixed values $(k,d)$, we generate $M=100$ independent realizations, and we compute the average and the standard deviation of total $M$ obtained values. In each figure, the shaded areas are obtained by moving the average values one standard deviation above and below. Figure \ref{fig:logistic-1} denotes the behavior of the standard risk, adversarial risk, and the boundary risk of the Bayes-optimal classifier, as the dimensions ratio $d/k$ grows. Further, Figure \ref{fig:logistic-2} exhibits a similar behavior for several values of adversary's power $\eps$, in which it can be observed that the boundary risk decays to zero.

\subsection{Is it necessary to learn the latent structure to obtain a vanishing boundary risk? A simple case}\label{sec:agnostic}

In the previous sections, for the two binary classification settings, we showed that when the features inherently have a low-dimensional structure, the boundary risk of the Bayes-optimal classifiers will converge to zero, as the ambient dimension grows to infinity. A closer look at the Bayes-optimal classifier of each setting (can be seen in Corollary \ref{coro: optimals}) reveals the fact that these classifiers directly use the knowledge of the nonlinear mapping from the low-dimensional manifold to the ambient space. In other words, the Bayes-optimal classifiers explicitly draw upon the generative components $\vph$ and $W$. In this section, we investigate the existence of classifiers that are agnostic to the mapping between the low-dimensional and the high-dimensional space, while they have asymptotically vanishing boundary risk. For this purpose, consider binary classification under the Gaussian mixture model \eqref{eq: gaussian-mix}. In addition, assume a training set $\{(x_i,y_i )\}_{i=1}^n$ sampled from \eqref{eq: gaussian-mix}.  We focus on the class of linear classifiers $h_{\th}(x)=\sign(x^\sT \th)$ with $\th\in \reals^d$ and $\ell_2$ perturbation ($p=2$). 

We consider the logistic loss $\ell(t)=\log(1+\exp(-t))$, and assume that the adversary's power is bounded by $\eps$. We consider the minimax approach of \cite{madry2017towards} to adversarially train a model $\theta$ by solving the following robust empirical risk minimization (ERM):  
 \[
 \hth^{\eps}=\arg\min\limits_{\th\in {\reals^d}}^{} \frac{1}{n}\sum\limits_{i=1}^{n}\max\limits_{u\in B_{\eps}(x_i)}^{} \ell(y_iu^\sT\th)\,.
 \]
This is a convex optimization problem, as it can be cast as a point-wise maximization of the convex functions $\ell(y_iu^\sT\th)$. Further, when perturbations are from the $\ell_2$ ball, the inner maximization problem can be solved explicitly (see e.g.~\cite{javanmard2020precise}), which leads to the following equivalent problem:
\begin{equation}\label{eq: tmp9}
 \hth^{\eps}=\arg\min\limits_{\th\in {\reals^d}}^{} \frac{1}{n}\sum\limits_{i=1}^{n} \ell(y_ix_i^\sT\th-\eps\twonorm{\th})\,.
\end{equation}
Figure \ref{fig:main} demonstrates the effect of the dimensions ratio $d/k$ on the standard, adversarial, and the boundary risk of the classifier $h_{\hth^{\eps}}$ for four different choices of the feature mapping $\vph$: $(i)$ $\vph_1(t)=t$, $(ii)$ $\vph_2(t)=t/4+\sign(t) 3t/4$, $(iii)$ $\vph_3(t)=t+\sign(t)t^2$, and $(iv)$ $\vph_4(t)=\tanh(t)$.  In this example, we consider the ambient dimension $d=100$, and number of samples $n=300$. In addition, $k$ varies from $1$ to $100$, and $\mu$, $W$ have i.i.d. entries $\normal(0,1/k)$. Further, we consider balanced classes (each label $\pm1$ occurs with probability $\pi=1/2$). The plots in Figure \ref{fig:main} exhibit the behavior of the standard, adversarial, and the boundary risks of the classifier $h_{\hth^{\eps}}$, for each of these mappings and for the adversary's power $\eps = 1$. For each fixed value of $k,d$, we consider $M=20$ trials of the setup. The solid curve denote the average values over these $M$ trials. The shaded areas are obtained by plotting one standard deviation above and below the main curves. The plots in Figure~\ref{fig:app-2} showcase the boundary risk for different choices of $\eps$.  As we observe, the boundary risk decreases to zero, when the dimensions ratio $d/k$ grows to infinity.  Our next theorem proves this behavior for the special case of $\varphi(t) = t$.  

\begin{figure*}
	\centering
	\begin{subfigure}[t]{0.4\textwidth}
		\includegraphics[scale=0.42]{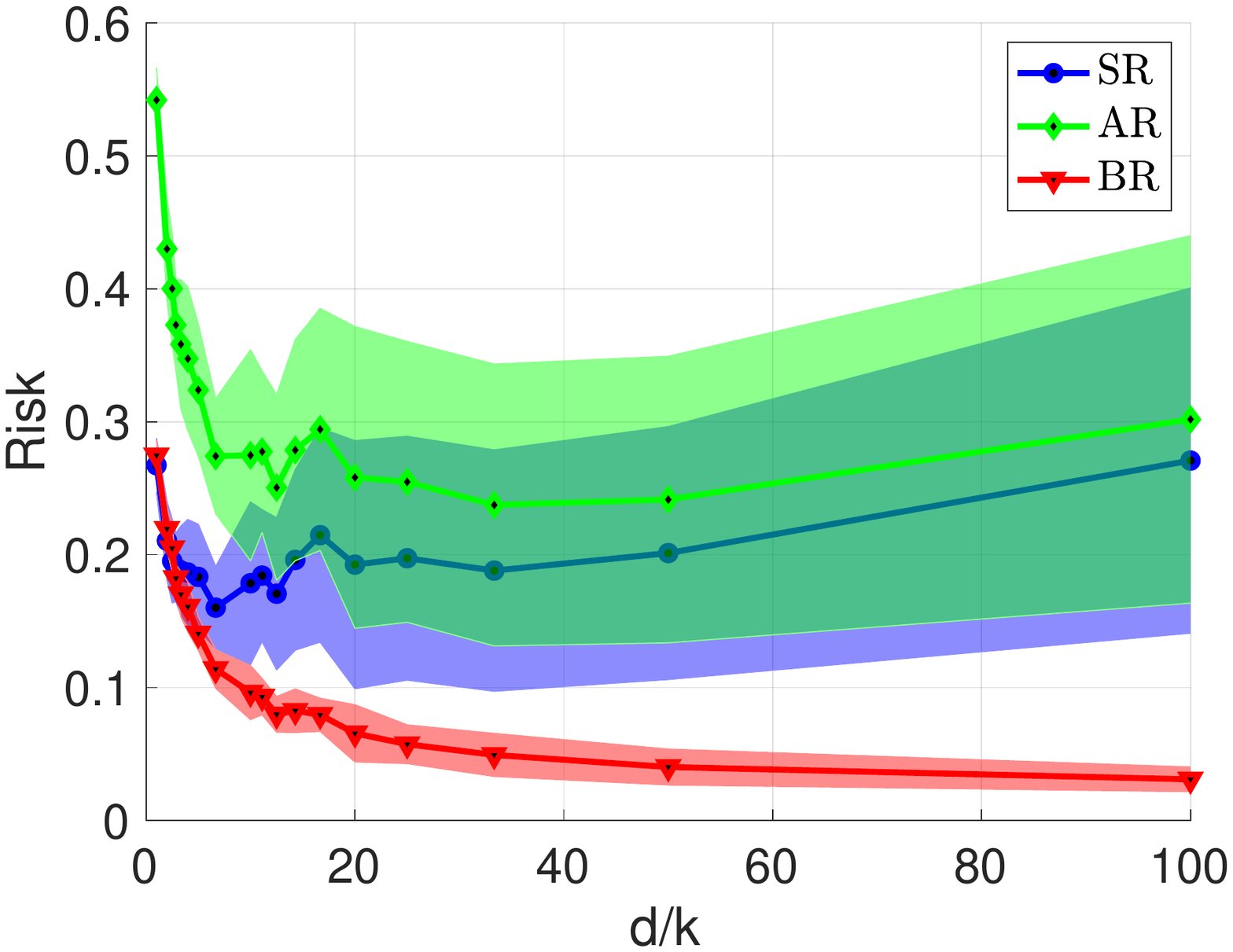}
		\caption{Feature mapping $\vph(t)=t$ and adversary's power $\eps=1$ }
		\label{fig:app:identity-1}
	\end{subfigure}%
	\hfill
	\begin{subfigure}[t]{0.4\textwidth}
		\includegraphics[scale=0.4]{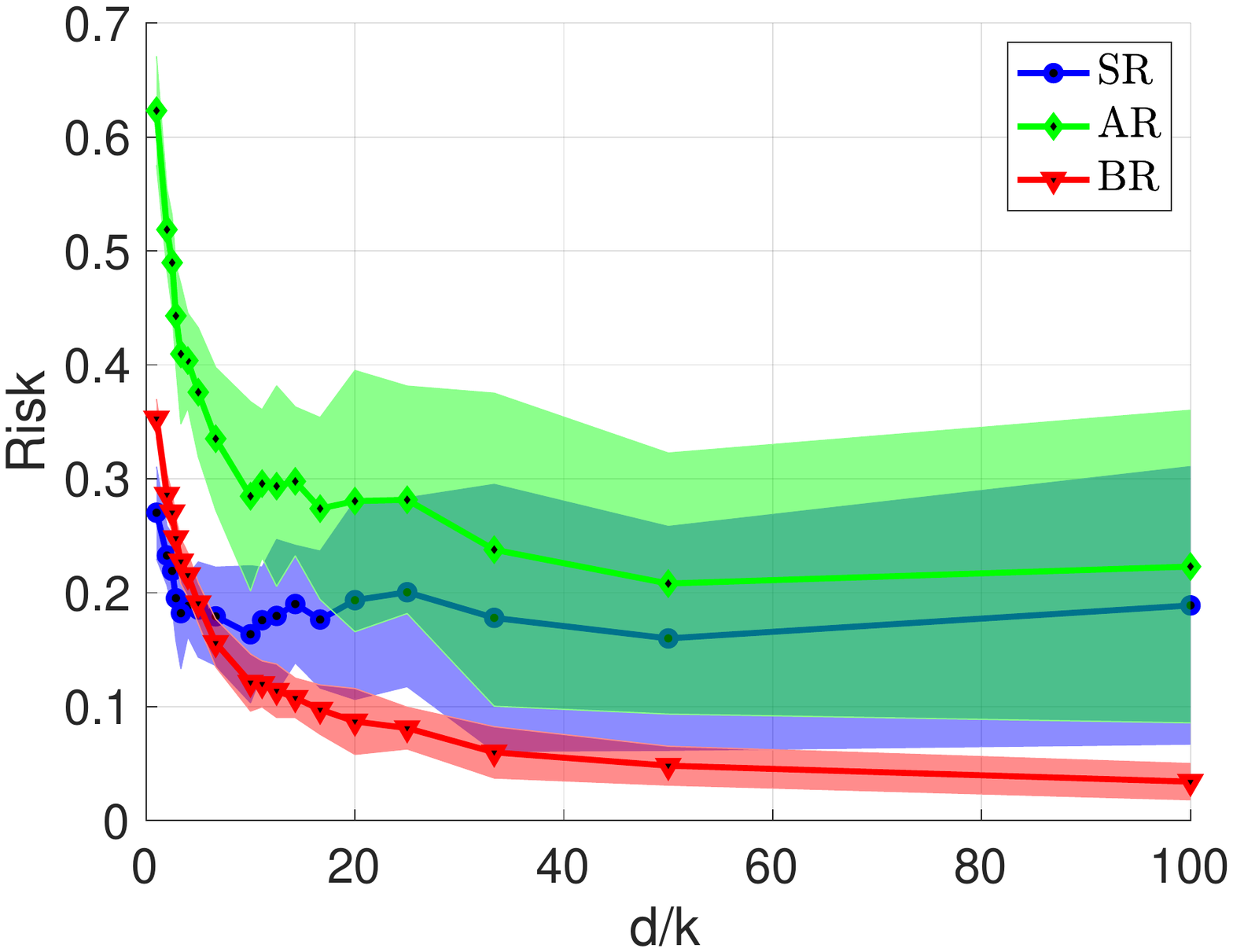}
		\caption{Feature mapping $\vph(t)=t/4+\sign(t)3t/4$ and adversary's power $\eps=1$}
		\label{fig:app:leaky-1}
	\end{subfigure}
		
	\begin{subfigure}[b]{0.4\textwidth}
		\centering
		\includegraphics[scale=0.54]{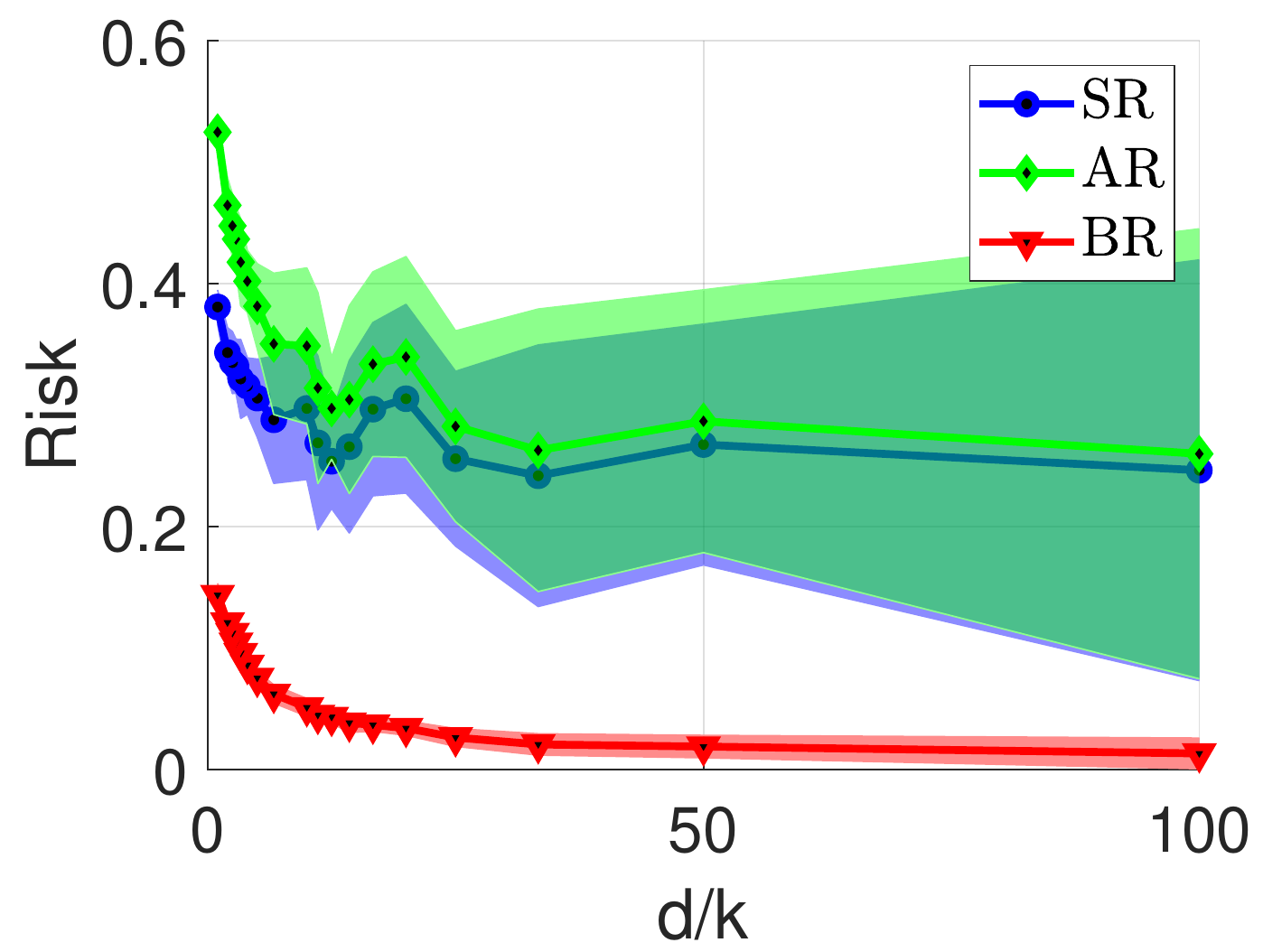}
		\caption{Feature mapping $\vph(t)=t+\sign(t)t^2$ and adversary's power $\eps=1$}
		\label{fig:app:quadratic-1}
	\end{subfigure}%
	\hfill
	\begin{subfigure}[b]{0.4\textwidth}
		\centering
		\includegraphics[scale=0.54]{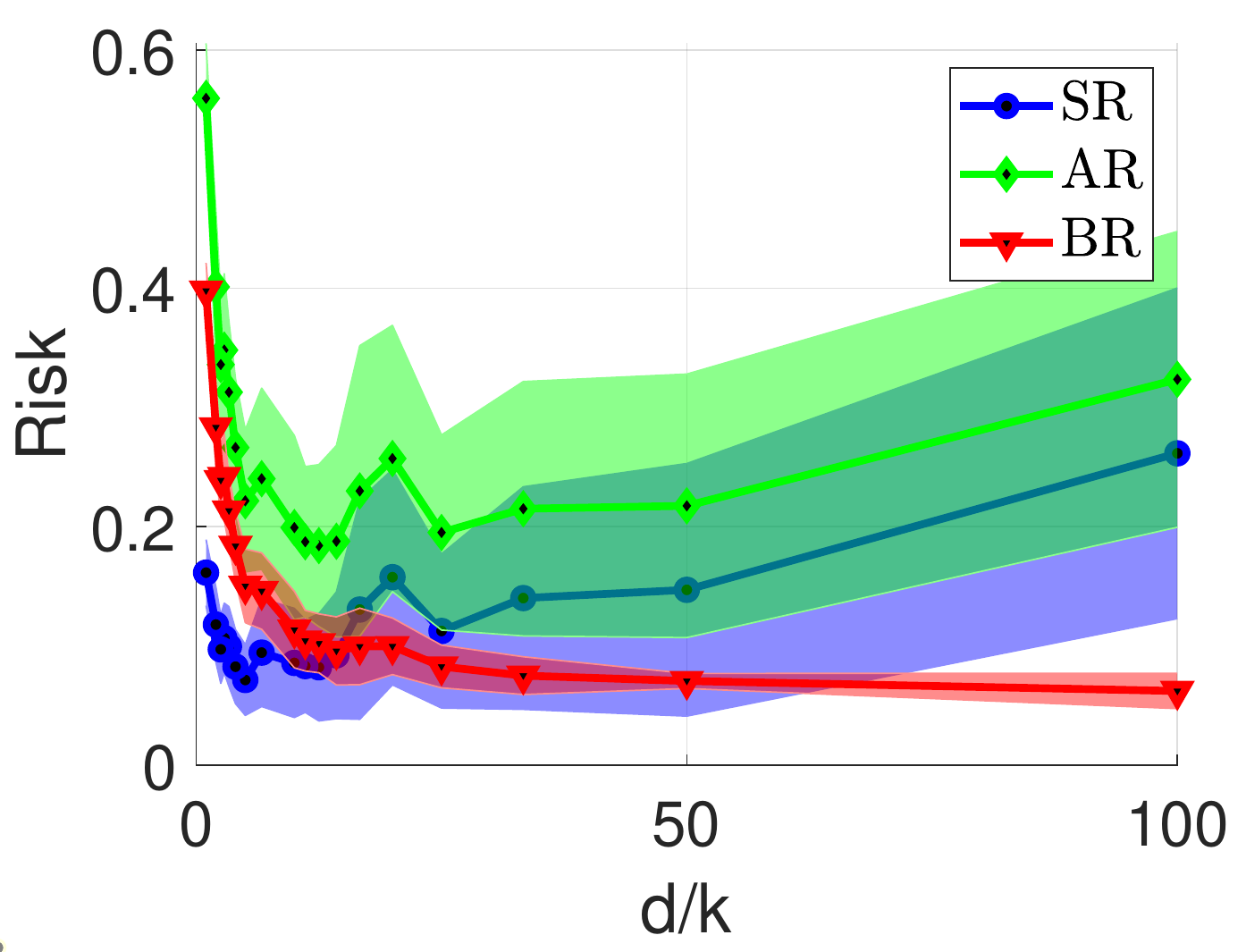}
		\caption{Feature mapping $\vph(t)=\tanh(t)$ and adversary's power $\eps=1$.}
		\label{fig:app:tanh-1}
	\end{subfigure}
	\caption{Effect of dimensions ratio $d/k$ on the standard, adversarial, and boundary risks of the linear classifier $h_\th(x)=\sign(x^\sT \th)$ with $\th$ being the robust empirical risk minimizer~\eqref{eq: tmp9}. 
Samples are generated from the Gaussian mixture model \eqref{eq: gaussian-mix} with balanced classes ($\pi=1/2$), and with four choices of feature mapping $\vph$: (a)\; $\vph(t)=t$, (b)\ $\vph(t)=3t/4+\sign(t) t/4$,
(c)\ $\vph(t)=t+\sign(t)t^2$ and (d)\, $\vph(t)=\tanh(t)$. In these experiments, the ambient dimension $d$ is fixed at $100$, and the manifold dimension $k$ varies from $1$ to $100$. The sample size is $n=300$ the classes average $\mu$ and the weight matrix $W$ have i.i.d. entries from $\normal(0,1/k)$. The adversary's power is fixed at $\eps=1$. For each fixed values of $k$ and $d$, we consider $M=20$ trials of the setup.  Solid curves represent the average results across these trials, and the shaded areas represent one standard deviation above and below the corresponding curves.}
	\label{fig:main}
\end{figure*}


\begin{figure*}
	\centering
	\begin{subfigure}[t]{0.47\textwidth}
		\centering
		\includegraphics[scale=0.4]{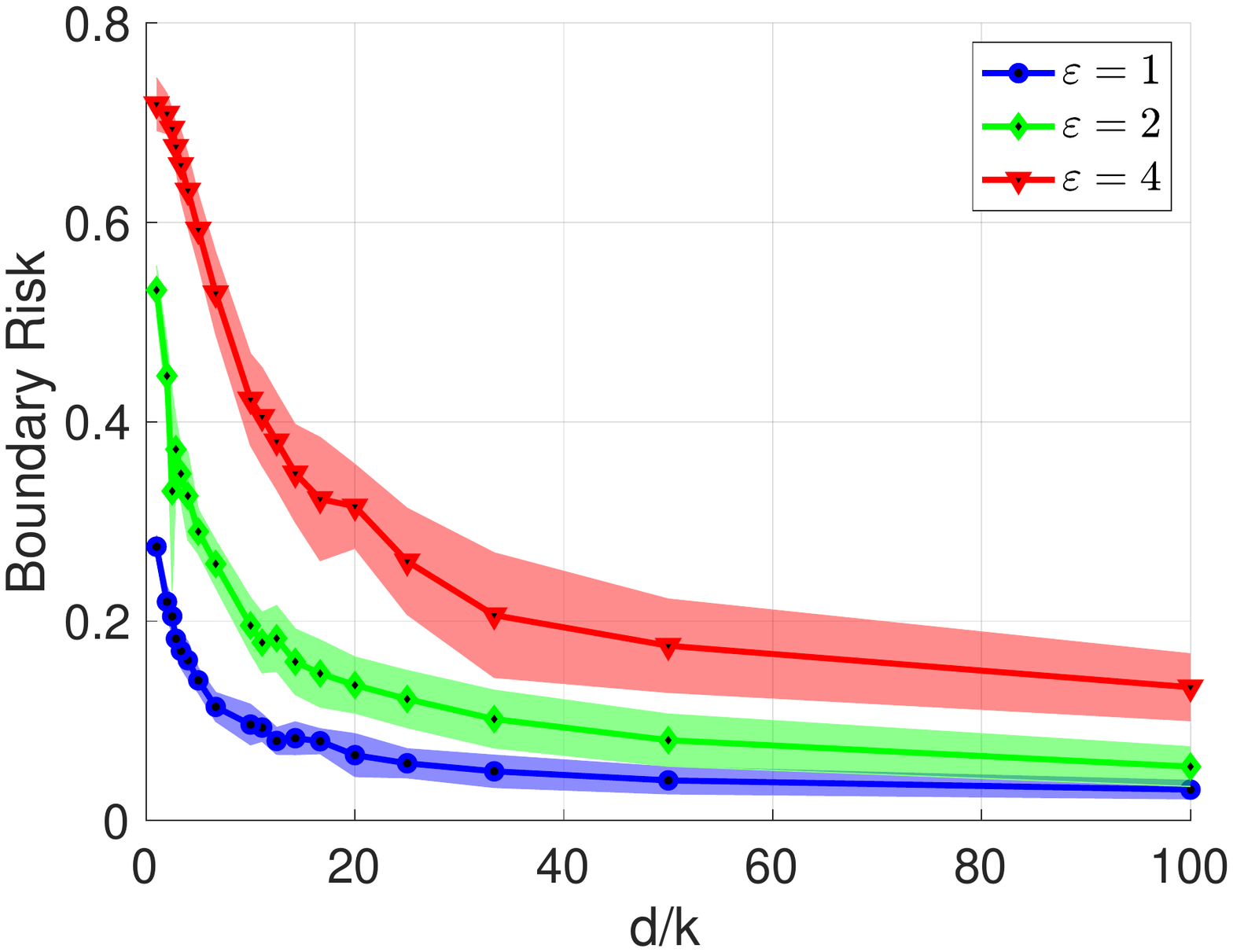}
		\caption{Boundary risk with the feature mapping $\vph(t)=t$ and for multiple values of adversary's power $\eps$ }
		\label{fig:app:identity-2}
	\end{subfigure}%
	\hfill
	\begin{subfigure}[t]{0.47\textwidth}
		\centering
		\includegraphics[scale=0.4]{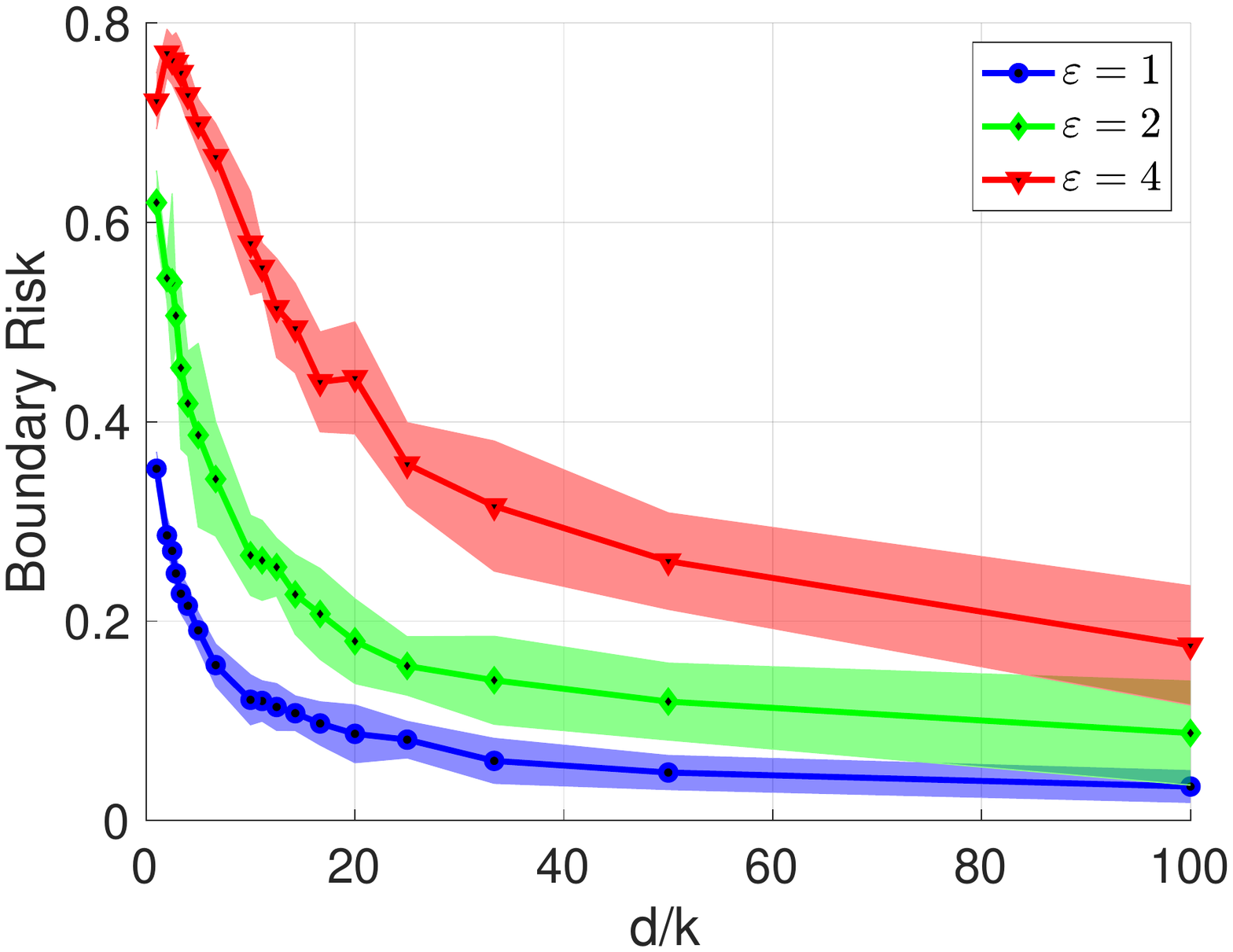}
		\caption{Boundary risk with the feature mapping $\vph(t)=t/4+\sign(t)3t/4$ and for multiple values of adversary's power $\eps$}
		\label{fig:app:leaky-2}
	\end{subfigure}
		
	\begin{subfigure}[t]{0.47\textwidth}
		\centering
		\includegraphics[scale=0.4]{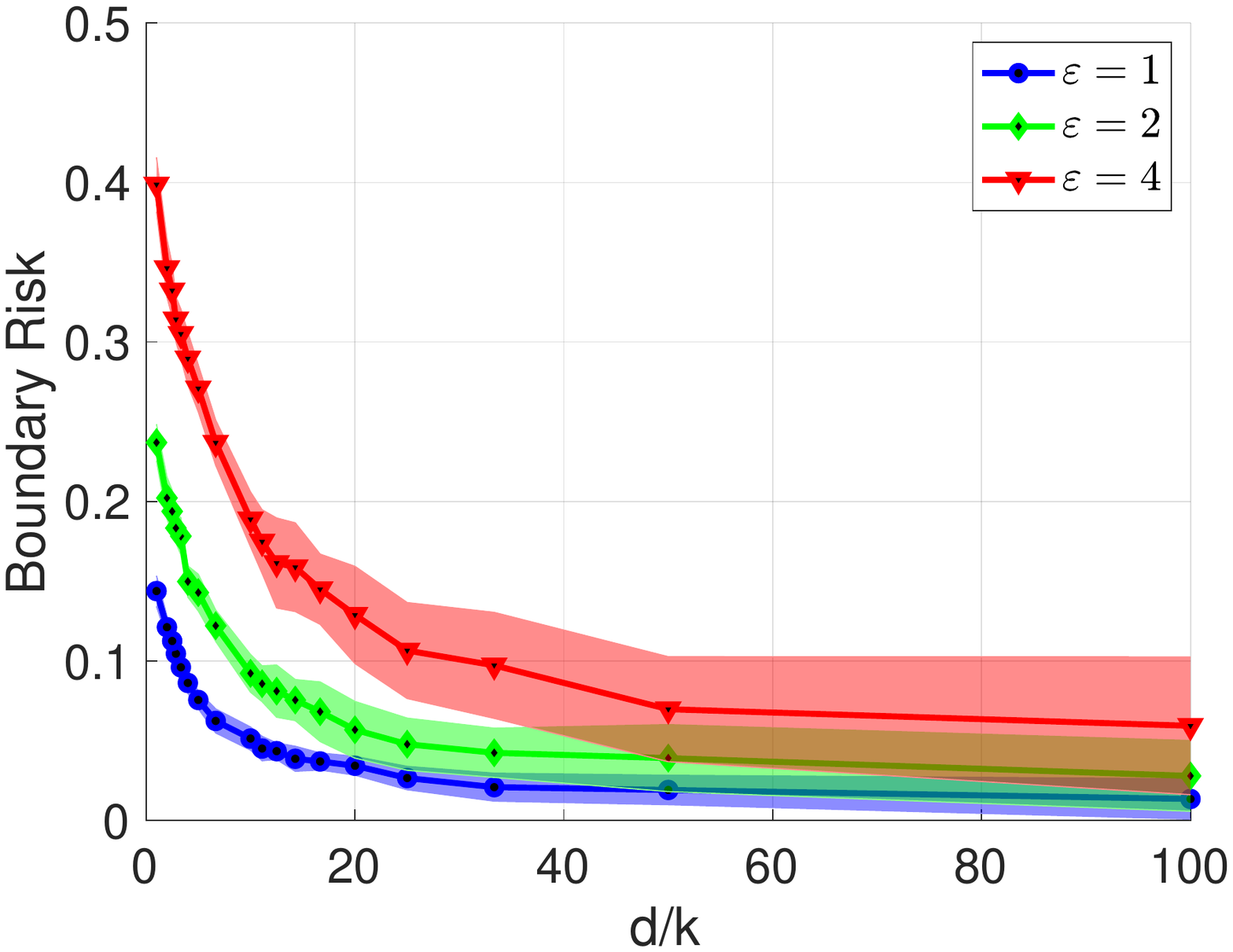}
		\caption{Boundary risk with the feature mapping $\vph(t)=t+\sign(t)t^2$ and for multiple values of adversary's power $\eps$}
		\label{fig:app:quadratic-2}
	\end{subfigure}%
	\hfill
	\begin{subfigure}[t]{0.47\textwidth}
		\centering
		\includegraphics[scale=0.39]{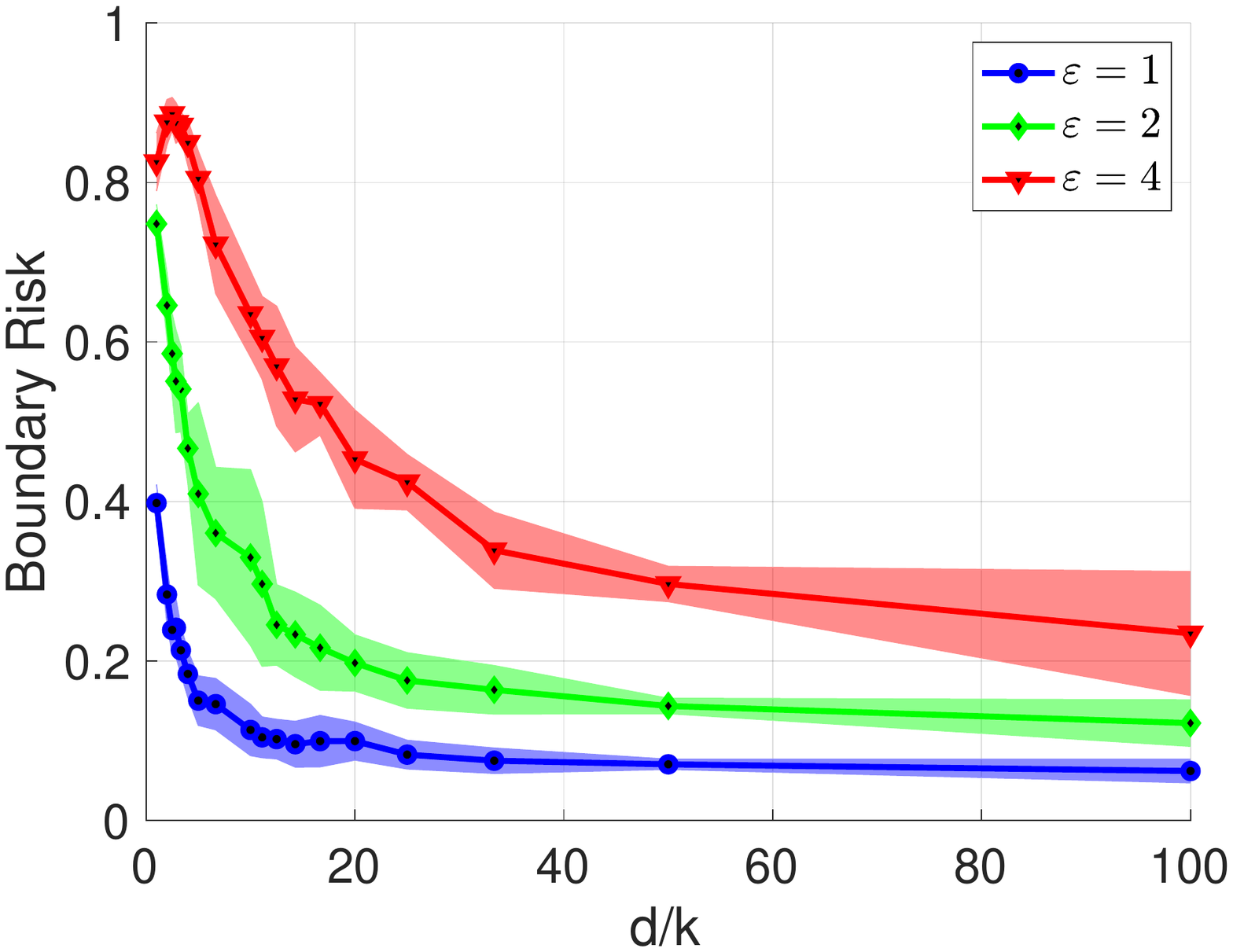}
		\caption{Boundary risk with the feature mapping $\vph(t)=\tanh(t)$ and for multiple values of adversary's power $\eps$}
		\label{fig:app:tanh-2}
	\end{subfigure}
	\caption{Effect of dimensions ratio $d/k$ on the boundary risk of the linear classifier $h_\th(x)=\sign(x^\sT \th)$ with $\th$ being the robust empirical risk minimizer~\eqref{eq: tmp9}. 
Samples are generated from the Gaussian mixture model \eqref{eq: gaussian-mix} with balances classes ($p=1/2$), and with four choices of feature mapping $\vph$: (a)\; $\vph(t)=t$, (b)\ $\vph(t)=3t/4+\sign(t) t/4$,
(c)\ $\vph(t)=t+\sign(t)t^2$ and (d)\, $\vph(t)=\tanh(t)$. In these experiments, the ambient dimension $d$ is fixed at $100$, and the manifold dimension $k$ varies from $1$ to $100$. The sample size is $n=300$ the classes average $\mu$ and the weight matrix $W$ have i.i.d. entries from $\normal(0,1/k)$. We consider different levels of the adversary's power  $\eps\in\{1,2,4\}$. For each fixed values of $k$ and $d$, we consider $M=20$ trials of the setup.  Solid curves represent the average results across these trials, and the shaded areas represent one standard deviation above and below the corresponding curves.}
	\label{fig:app-2}
\end{figure*}

\begin{thm}\label{thm: agnostic}
Consider binary classification under the Gaussian mixture model \eqref{eq: gaussian-mix} with identity mapping $\vph(t)=t$ in the presence of an adversary with $\ell_p$ norm bounded perturbations of size $\eps_p$ for some $p\geq2$. In addition,  Let $h_{\th}(x)=\sign(x^\sT \th)$ be a linear classifier with $\th \in \reals^d$ and assume that as the ambient dimension $d$ grows to infinity, the following condition on the weight matrix $W$ and the decision parameter $\th$ hold:
\begin{equation} \label{eq: agnostic-conditons}
 \frac{\eps_p d^{\frac{1}{2}-\frac{1}{p}}}{\sigma_{\min}(W)} {\left({1-{\twonorm{P_{\mathsf{Ker}(W^\sT)}(\th)}^2}/{\twonorm{\th}^2}}\right)^{-1/2}}=o_d(1)\,,
\end{equation}
where $P_{\mathsf{Ker}(W)}(\th)$ stands for the $\ell_2$-projection of vector $\th$ onto the kernel of the matrix $W$. Then the boundary risk of the classifier $h_\th$ will converge to zero.

In particular, assume that $\hth^\eps$ is the solution of the following adversarial empirical risk minimization (ERM) problem:

\[
\hth^\eps=\arg\min\limits_{\th \in \reals^d}^{} \frac{1}{n}\sum\limits_{i=1}^{n} \sup\limits_{u \in B_{\eps}(x_i)}^{} \ell( y_iu^\sT\th)\,,
\] 
with $\ell:\reals\rightarrow \reals^{\geq 0}$ being a strictly decreasing loss function. In this case, with the weight matrix $W$ satisfying $ \frac{\eps_p d^{\frac{1}{2}-\frac{1}{p}}}{\sigma_{\min}(W)}=o_d(1)$, the boundary risk of $h_{\hth^\eps}$ converges to zero.
\end{thm}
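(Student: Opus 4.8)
The plan is to reduce the boundary risk of a linear classifier to a one–dimensional Gaussian interval probability, control that probability through the condition on $W$ and $\th$, and then obtain the agnostic part by showing that the robust ERM minimizer automatically lies in the column space of $W$, so that the correction factor in~\eqref{eq: agnostic-conditons} equals one.

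For the first statement I would start from the boundary–risk formula~\eqref{eq: BR} specialized to $h_\th(x)=\sign(x^\sT\th)$. Writing $q=p/(p-1)$ for the dual exponent, Hölder duality gives $\inf_{\pnorm{x'-x}\le\eps_p}(x')^\sT\th=x^\sT\th-\eps_p\qnorm{\th}$, so the adversary can flip a correctly classified point exactly when $0< y\,x^\sT\th\le\eps_p\qnorm{\th}$. Under~\eqref{eq: gaussian-mix} with $\vph=\mathrm{id}$ we have $x=Wz$ with $z\sim\normal(y\mu,I_k)$, hence $T:=y\,x^\sT\th=y\,z^\sT W^\sT\th$ is a scalar Gaussian with variance $s^2:=\twonorm{W^\sT\th}^2$ (and mean $\mu^\sT W^\sT\th$), independently of $y$. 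Therefore $\BR(h_\th)=\prob(0<T\le\eps_p\qnorm{\th})$, and bounding this interval probability by its length times the maximal Gaussian density yields $\BR(h_\th)\le \eps_p\qnorm{\th}/(s\sqrt{2\pi})$.

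The next step converts this bound into~\eqref{eq: agnostic-conditons} via two elementary inequalities. First, since $q\le 2$, the norm comparison gives $\qnorm{\th}\le d^{1/2-1/p}\twonorm{\th}$. Second, decomposing $\th=(I-P_{\mathsf{Ker}(W^\sT)})\th+P_{\mathsf{Ker}(W^\sT)}\th$ and using $W^\sT P_{\mathsf{Ker}(W^\sT)}\th=0$ together with the SVD of $W$ gives $s=\twonorm{W^\sT\th}=\twonorm{W^\sT(I-P_{\mathsf{Ker}(W^\sT)})\th}\ge \sigma_{\min}(W)\twonorm{(I-P_{\mathsf{Ker}(W^\sT)})\th}$, because $(I-P_{\mathsf{Ker}(W^\sT)})\th$ lies in $\mathrm{Col}(W)$, on which $W^\sT$ acts with least singular value $\sigma_{\min}(W)$. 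Combining, $\BR(h_\th)\le (2\pi)^{-1/2}\,\eps_p d^{1/2-1/p}\sigma_{\min}(W)^{-1}\,\twonorm{\th}/\twonorm{(I-P_{\mathsf{Ker}(W^\sT)})\th}$, and since Pythagoras gives $\twonorm{(I-P_{\mathsf{Ker}(W^\sT)})\th}^2/\twonorm{\th}^2=1-\twonorm{P_{\mathsf{Ker}(W^\sT)}\th}^2/\twonorm{\th}^2$, the right-hand side is precisely the $o_d(1)$ quantity in~\eqref{eq: agnostic-conditons}. This proves the first claim.

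For the agnostic part I would show that any minimizer $\hth^\eps$ of the robust ERM satisfies $P_{\mathsf{Ker}(W^\sT)}\hth^\eps=0$, whence the correction factor is one and the hypothesis collapses to the bare condition $\eps_p d^{1/2-1/p}/\sigma_{\min}(W)=o_d(1)$. The key observation is that the data lie in the column space, $x_i=Wz_i\in\mathrm{Col}(W)=\mathsf{Ker}(W^\sT)^\perp$, so $x_i^\sT\th=x_i^\sT(I-P_{\mathsf{Ker}(W^\sT)})\th$ is unaffected by the kernel component of $\th$, while for a strictly decreasing loss the inner maximization over the $\ell_2$ ball $B_\eps(x_i)$ yields $\ell\bigl(y_i x_i^\sT\th-\eps\twonorm{\th}\bigr)$. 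Replacing $\th$ by $(I-P_{\mathsf{Ker}(W^\sT)})\th$ leaves every data fit $x_i^\sT\th$ unchanged but strictly decreases $\twonorm{\th}$ whenever $P_{\mathsf{Ker}(W^\sT)}\th\neq 0$; since $\ell$ is strictly decreasing and $\eps>0$, this strictly lowers the objective, so no minimizer can carry a kernel component. Then $\twonorm{\hth^\eps}=\twonorm{(I-P_{\mathsf{Ker}(W^\sT)})\hth^\eps}$ and the first statement applies verbatim. The routine calculations are the interval-probability estimate and the two norm inequalities; the only genuinely delicate point is this monotonicity argument, which hinges on the worst-case inner loss depending on $\th$ only through a column-space data fit plus a norm penalty (using that the training ball is $\ell_2$, consistent with $B_\eps$) and on strict decrease of $\ell$ with $\eps>0$ forcing the out-of-manifold component to vanish. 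I expect that structural step to be the crux, since everything else follows directly from the first statement.
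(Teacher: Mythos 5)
Your proposal is correct and follows essentially the same route as the paper's proof: reduce the boundary risk of $h_\th$ to a one-dimensional Gaussian interval probability, bound it by the interval length times the peak Gaussian density, lower-bound $\twonorm{W^\sT\th}$ via the orthogonal decomposition of $\th$ against $\mathsf{Ker}(W^\sT)$ together with $\sigma_{\min}(W)$, and eliminate the kernel component of the robust ERM minimizer through the same strict-decrease contradiction argument. The only cosmetic difference is that you treat the $\ell_p$ adversary exactly via the dual norm $\qnorm{\th}$ and then relax $\qnorm{\th}\le d^{1/2-1/p}\twonorm{\th}$, whereas the paper first embeds the $\ell_p$ ball in an $\ell_2$ ball of radius $\eps_p d^{1/2-1/p}$ and runs the $\ell_2$ argument; both yield the identical final bound.
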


We refer to Section~\ref{proof:thm: agnostic} for the proof of Theorem~\ref{thm: agnostic}.


\section{Boundary risk of Bayes-optimal image classifiers}
We next provide several numerical experiments on the MNIST image data
to corroborate our findings regarding the role of low-dimensional structure of data on the boundary risk of Bayes-optimal classifiers.
Of course, the evaluation of this finding on image data is challenging 
since learning particular structure
of the underlying image distribution is notoriously a difficult problem. There have been a few well-established techniques for this task that we briefly discuss below.

Generative Adversarial Net (GAN) \citep{goodfellow2014generative} is among the most successful methods in modeling the statistical structure of image data. Despite the remarkable success of GANs in generating realistic high resolution images, it has been observed that they may fail in capturing the full data distribution, which is referred to as \textit{model collapse}. In addition, computing the likelihood of image data with GANs requires to perform complex computations. As a direct implication of these observations, it is not statistically accurate and efficient to deploy GANs to formulate the Bayes optimal classifiers \citep{richardson2018gans, richardson2021bayes}.

Fitting elementary statistical models can mitigate the statistical inaccuracies of GANs.  \citep{richardson2018gans} learns the statistical structure of image data by using the class of \textit{Gaussian Mixture Models} (GMM). This choice is motivated by the statistical power of GMMs that they are universal approximators of probability densities \citep{goodfellow2016deep}. On the other hand, working with general Gaussian covariance matrices can make the estimation problem both in terms of computational cost and memory storage extremely prohibitive. \citep{richardson2018gans} deployed \textit{Mixture of Factor Analyzers (MFA)} \citep{ghahramani1996algorithm} to avoid storage and computation with such high-dimensional matrices.  This deployment is aligned with the former intuition that the space of meaningful images is indeed a small portion of the entire high-dimensional space. In addition, they show that with moderate number of components in the GMM one can produce adequately realistic images, and further reduce the computational burden. 

We will adopt the MFA procedure introduced in \citep{richardson2018gans} to generate realistic image data for our numerical experiments. The main reasons for this adoption are: i) this framework is flexible for generating realistic images from a low-dimensional subspace, and ii) it enables us to accurately and efficiently calculate the log-likelihood of images, which can be used later to formulate the Bayes-optimal classifier. It is worth noting that, using a class of less complex models, in this case GMMs rather than GANs, will output images with lower resolution, which is not a major concern for the main purpose of this numerical study. In the next sections, we first provide a brief overview of the GMM estimation steps and then review some of the standard frameworks to produce adversarially crafted examples.

\subsection{Learning image data with Gaussian Mixture Models (GMM)}
A general setup for fitting a GMM to image data $\{x_i\}_{1:n}\in \reals^d$ is based on the following model
\[
x\sim\sum_{k=1}^{K}\alpha_k\normal(\mu_k,\Sigma_k)\,, \quad \Sigma_k\in \reals^{d\times d}\,,~~ \mu_k\in \reals^d\,,
\]
where $K$ denotes the number of components in GMM and $\alpha_i$ are mixing weights.  This problem, without imposing any extra structure on image data,  involves learning $O(Kd^2)$ parameters which can be extremely difficult for high-dimensional images. \citep{richardson2018gans} deployed a mixture of factor analyzers (MFA), where they use tall matrices $A_{d\times \ell}$ to embed a low-dimensional subspace in the full data space. In this case, the following model is considered

\[
x\sim\sum_{k=1}^{K}\alpha_k\normal(\mu_k,A_kA_k^\sT+D_k)\,, \quad  A_k\in \reals^{d\times \ell},~~D_k\in \reals^{d\times d}\,,~~ \mu_k\in \reals^d\,,
\]
where $D_k$ is a diagonal matrix showing the variance on each single pixel. This model ameliorates the previous high storage and computational cost, as in this case $Kd(\ell+2)$ learning parameters exist, which scales linearly with image dimension $d$. This model is intimately related to the specific case of the low-dimensional manifold models on features in \eqref{eq: gaussian-mix} and \eqref{eq: generative} with the identity mapping $\varphi(t)=t$. The only difference is in the entrywise independent Gaussian noise coming from diagonal matrix $D_k$, however in the numerical experiments we observed that indeed the estimated values of $D_k$ entries are extremely small, which makes this difference negligible. We use the maximum likelihood estimator to compute the model parameters.   For this purpose, we need to compute the log-probabilities.  For a single component of GMM we first compute the likelihood $p(x|A,D,\mu)$ given by
\begin{equation}\label{eq: likelihood}
p(x|A,D,\mu)=-\frac{1}{2}\left[ d \log (2\pi)+\log \det (AA^\sT+D)+(x-\mu)^\sT\big(D+AA^\sT\big)^{-1}(x-\mu) \right]\,.
\end{equation}

\citep{richardson2018gans} used the following algebraic identities with $u=x-\mu$, $L=A^\sT D A\in \reals^{\ell \times \ell}$ to avoid large matrix storage and multiplications:
 \begin{align*}
     &u^\sT \Sigma^{-1}u=u^\sT\left[ D^{-1}u -D^{-1}A L^{-1} (A^\sT D^{-1}u) \right]\,,\\
     &\log \det (AA^\sT+D)=\log \det L +\sum\limits_{j=1}^{d}\log(d_j) \,,
 \end{align*}
 where $d_i$ denotes the $i$-th entry on the diagonal of matrix $D$.  In addition, \citep{richardson2018gans} employed \textit{differentiable programming} framework to efficiently solve the corresponding Maximum-likelihood optimization problem on GPU. We use their publicly available code at \url{https://github.com/eitanrich/gans-n-gmms} to fit GMMs on full image data sets. As a simple example, we consider models with $K=10$ components and the manifold dimensions $\ell=1,10,100$. We fit these three models to the training samples of the MNIST data set \citep{deng2012mnist} that are labeled ``6".  Figure \ref{fig: sample-six} exhibits sample images generated from the learned GMM models.

\begin{figure*}
	\centering
	\begin{subfigure}[t]{0.31\textwidth}
		\centering
		\includegraphics[scale=0.45]{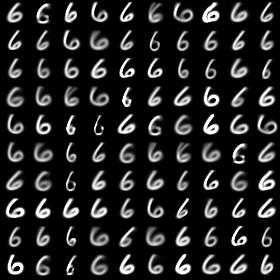}
		\end{subfigure}
	\hfill
	\begin{subfigure}[t]{0.31\textwidth}
		\centering
		\includegraphics[scale=0.45]{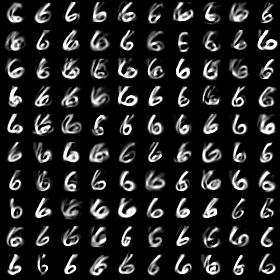}
		\end{subfigure}
         \hfill
\begin{subfigure}[t]{0.31\textwidth}
		\centering
		\includegraphics[scale=0.45]{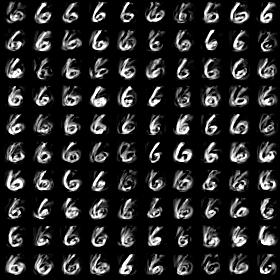}
		\end{subfigure}
	 \caption{Sample generated images from a GMM model fit on MNIST training set images with label six. Three GMM models with number of components $K=10$, and manifold dimensions $\ell=1,10,100$ (from left to right) are considered. }\label{fig: sample-six}
\end{figure*}

\subsection{PGD and FGM adversarial attacks}

Recall that adversarial examples are meant to be close enough to original samples, yet be able to degrade the classifier performance. For a loss function $\cL$ consider the adversarial optimization problem 
\[
\max \limits_{\|x-x'\|_{\ell_2}\leq \eps }^{} \cL(h(x'),y)\,.
\]
Using the 0-1 loss $\cL(h(x'),y)=\ind(y\neq h(x'))$ yields the inner optimization problem \eqref{eq: AR}. A large body of proposed methods to produce adversarial examples consider the first-order linear approximation of the loss function around the original sample. More precisely, $x'$ is written as $x+\delta$, and then single/multi steps of gradient descent (GD) of the negative loss function is considered. In this framework, first a powerful predictive model, e.g. a neural network, is fit to the training samples, which will be used as a surrogate for the learner's model $h$. For $\ell_2$-bounded adversary's budget $\eps$, the \textit{Fast Gradient Method} (FGM) performs a single step of normalized GD which yields
\[
x'=x+\eps \frac{\nabla_x \cL(h(x),y)}{\| \nabla_x \cL(h(x),y) \|_{\ell_2}}\,.
\]
This method is first introduced in \citep{goodfellow2014explaining} for $\ell_{\infty}$-bounded attacks under the name \textit{Fast Gradient Sign Method (FGSM)}. Other variants for other $\ell_p$-bounded adversarial attacks are introduced in \citep{tramer2017space}, generally called the FGM (removing ``sign" from FGSM).  A more general scheme to produce adversarial examples is via a multi-step implementation of the above procedure with the projected gradient descent (PGD). This attack is introduced in  \citep{madry2017towards}, with iterative updates  given by
\[
x^{t+1}=\Pi_{B_\eps(x)}\left(x^t+\eps \frac{\nabla_x \cL(h(x),y)}{\| \nabla_x \cL(h(x),y) \|_{\ell_2}} \right)\,,
\]
where $\Pi_{B_\eps(x)}$ stands for the projection operator to the $\ell_2$-ball centered at $x$ with radius $\eps$. For our image classification numerical experiments, we will use FGM and PGD attacks to produce adversarial examples . We follow the same implementation of PGD and FGM adversarial attacks provided in CleverHans library v4.0.0 \citep{papernot2016technical}. The code for this implementation can be accessed at \href{https://github.com/cleverhans-lab/cleverhans}{https://github.com/cleverhans-lab/cleverhans}. In our implementation, the original image values are normalized to be in the interval $[0,1]$, and we clip perturbed pixel values to be in the same interval. We next present key findings from our numerical experiments.

\subsection{Main experiments and key findings}
In this section, we connect the previously described parts. Put all together, these are the three main steps of our experiments: 
\begin{enumerate}
    \item For several choices of $K$ (number of components) and latent dimensions $\ell$, we first fit two GMM models to zeros and sixes of the training set of MNIST data set. By deploying the learned models, we generate $5000$ new images with uniform probability on labels 6 or 0, i.e. at the beginning of generating an image, with equal chance we decide to use either of the models.  In addition, for the defined binary classification problem (0 vs 6), we deploy \eqref{eq: likelihood} to obtain two likelihood models $p_0(x),p_1(x)$.  This can be used to formulate the Bayes-optimal classifier $\ind(p_1(x)>p_0(x))$. 
    \item In this step, we adversarially attack the generated images. To this end, the data set is split into 80$\%$-20$\%$ training-test samples. The training set is used to train a neural network for PGD and FGM attacks.  The obtained model will be used later to craft adversarial examples for the $1000$ test images.
    \item Finally, the performance of the Bayes-optimal classifier on adversarially perturbed test images (size $1000$) is evaluated. 
\end{enumerate}

In our first experiment, we consider a fixed number of components $K=10$ along with three different latent dimensions $\ell=1,10,$ and $100$. For each pair of $(K,\ell)$, we randomly select one sample with label $6$ among the original $1000$ test images. For PGD adversarial attacks, we start from $\eps=0$, and incrementally increase the adversary's $\ell_2$-bounded power $\eps$ until to the point that the Bayes-optimal classifier fails to correctly label the sample. Figure \ref{fig: adv-3 by 3} displays the original, adversarial attack, and the perturbed images for each $
\ell$ value. The Bayes-optimal classifier fails at adversarial power $\eps=22.6,11,7$ for $\ell=1,10,100$ respectively. The result conforms to the fact that samples coming from a higher value of dimension ratio $d/\ell$ ( here $d=784$) indeed require stronger adversarial attacks (larger $\eps$).

\begin{figure}
    \centering
    \includegraphics[scale=1.5]{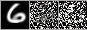}\\
    \centering
    \includegraphics[scale=1.5]{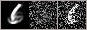}\\
    \centering
     \includegraphics[scale=1.5]{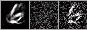}
    \caption{Sample attacks and perturbed images for three different latent dimensions $\ell=1,10,100$, respectively from top to bottom. In each row, from left to right, the original sample, the adversarially crafted perturbation, and the perturbed image is exhibited. The original images are generated from a GMM with the number of components $K=10$. In all images under the PGD adversarial perturbation, we start with the $\ell_2-$bounded adversarial power $\eps=0$, and incrementally increase it to the point that the Bayes-optimal classifier fails to infer the correct label. In this experiment, the Bayes-optimal misclassification on sampled images with $\ell=1,10,100$ occurs at $\ell_2-$bounded adversarial power $\eps=22.6, 11, 7$, respectively. 
    It can be seen that samples coming from smaller latent dimension $\ell$ require stronger perturbation to get misclassified by the Bayes-optimal classifier.}
    \label{fig: adv-3 by 3}
\end{figure}

In the second experiment, we consider the two choices of $K=1$ and $K=10$ and vary the latent dimension $\ell$. For each pair of $(K,\ell)$, we repeat the above three-step procedure for adversary's $\ell_2$-bounded power $\eps=12$ and compute the boundary risk of the Bayes-optimal classifier on the PGD and FGM perturbed images.  
The plots are included in Figure \ref{fig: BR-real image}.
As observed, by increasing the dimension ratio $d/\ell$, the boundary risk of the Bayes-optimal classifier decreases to zero.

\begin{figure*}
	\centering
	\begin{subfigure}[t]{0.45\textwidth}
		\centering
		\includegraphics[scale=0.65]{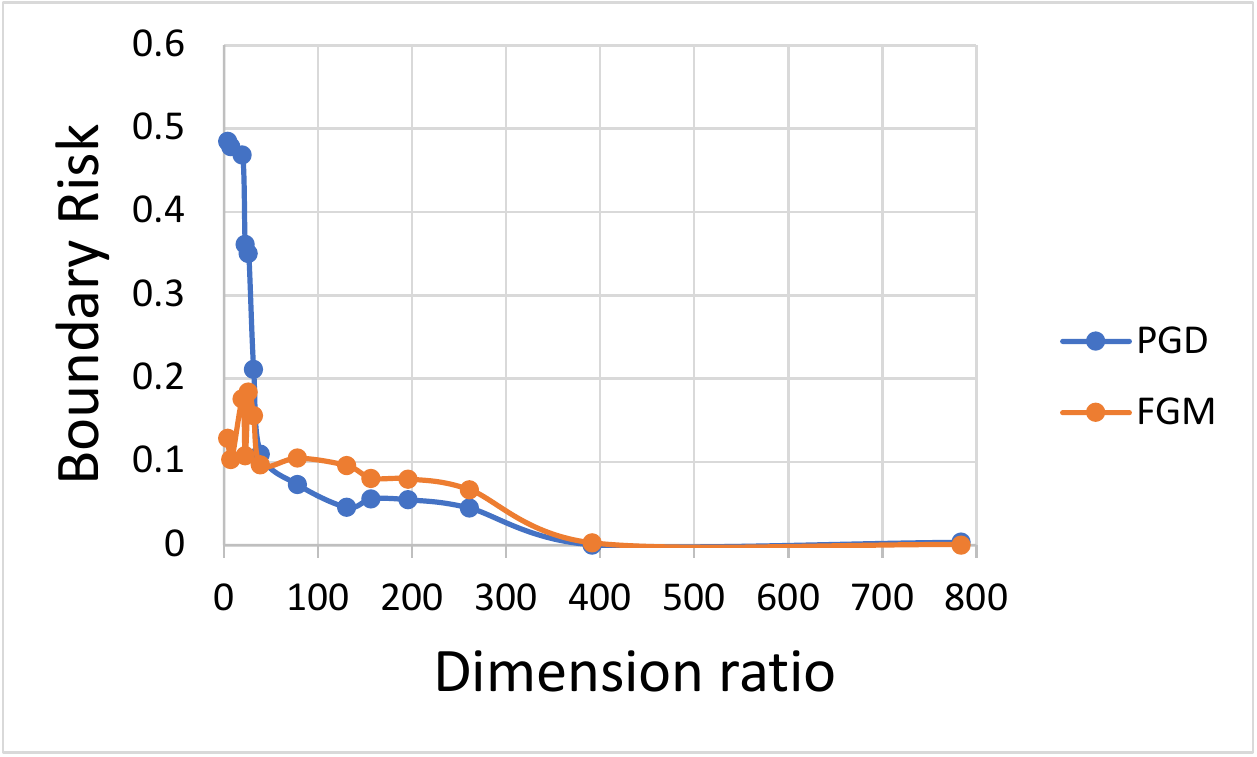}
		\end{subfigure}
	\hfill
	\begin{subfigure}[t]{0.45\textwidth}
		\centering
		\includegraphics[scale=0.65]{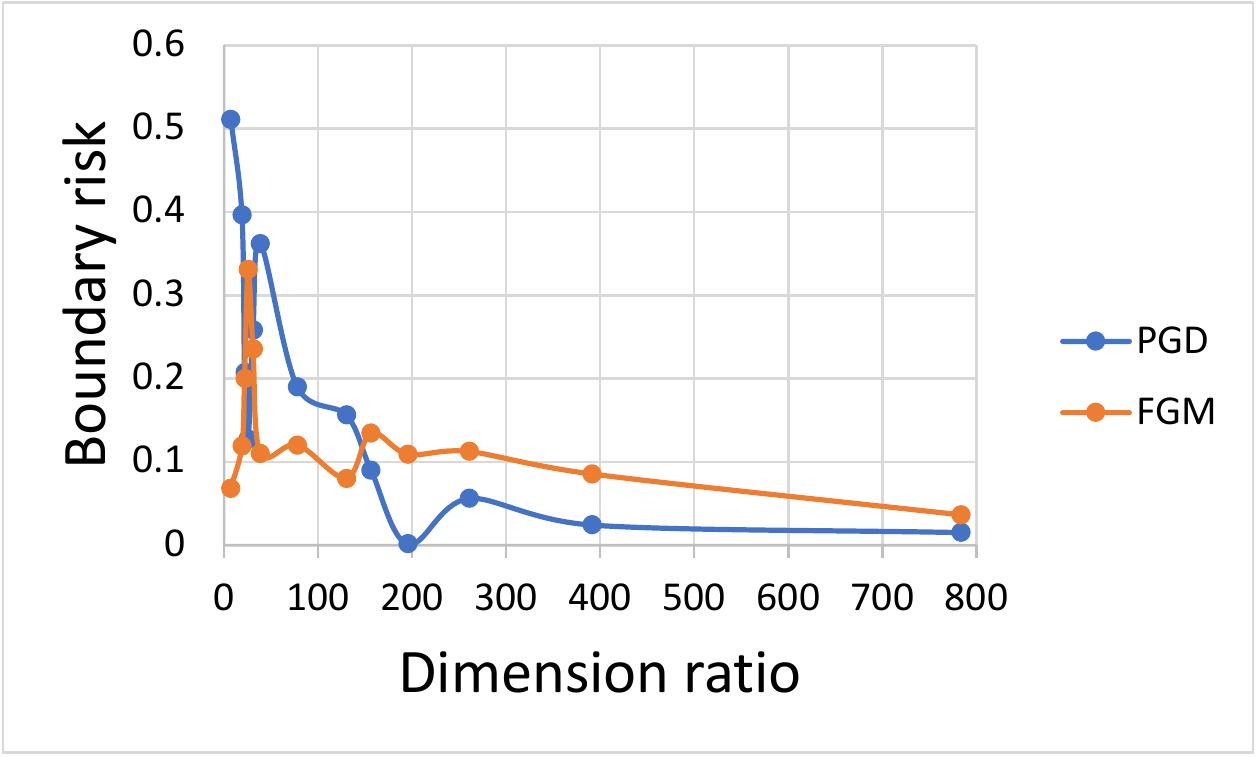}
		\end{subfigure}
	 \caption{Boundary risk of the Bayes-optimal classifier on $1000$ test images generated from GMM models with number of components $K=1$ and $K=10$ (from left to right). In each experiment, the adversary's $\ell_2$-bounded perturbation power is fixed at $\eps=12$, and both adversarial attacks PGD and FGM are considered. It can be observed that the boundary risk of the Bayes-optimal classifier will converge to zero as the dimension ratio $d/\ell$ grows.}\label{fig: BR-real image}
\end{figure*}


  

\section{Conclusion}
In this paper, we studied the role of data distribution (in particular latent low-dimensional manifold structures of data) on the tradeoff between robustness (against adversarial perturbations in the input, at test time) and generalization (performance on test data drawn from the same distribution as training data). We developed a theory for two widely used classification setups (Gaussian-mixture model and generalized linear model), showing that as the ratio of the ambient dimension to the manifold dimension grows, one can obtain models which both are robust and generalize well. This highlights the role of exploiting underlying data structures in improving robustness and also in mitigating the tradeoff between generalization and robustness. Through numerical experiments, we demonstrate that low-dimensional manifold structure of data, even if not exploited by the training method, can still weaken the robustness-generalization tradeoff.

\clearpage
\appendix

\section{Proof of theorems and technical lemmas}\label{sec:proofs}

\subsection{Proof of Proposition \ref{propo: bayes}} \label{proof: propo: bayes}
Denote the Bayes-optimal classifier by $h^*(x):=\sign(\eta(x)-1/2)$. Consider a classifier $h$ where the set $A=\{x: h(x)=+1\}$ is Borel measurable. We denote the complement of $A$ with $A^c=\reals^d\backslash A$.  Our goal is to show that $\SR(h^*)\leq\SR(h)$. First, for every $x\in \reals^d$, and a Borel measurable set $B$, it is easy to check that the following holds:
\begin{equation}\label{eq: tmp10}
(2\eta(x)-1)\cdot \left(  \ind(x\in B)-\ind(\eta(x)\leq 1/2)\right)  \geq 0\,.
\end{equation}
On the other hand, we have
\begin{align*}
\SR(h)&=\prob(h(x)y\leq0)\\
&=\E[ \ind(x\in A^c,y=+1)+\ind(x \in A,y=-1)]\\
&=\E_x\left[\E[\ind(x\in A^c,y=+1)+\ind(x\in A,y=-1)|x] \right]\\
&=\E_x\left[\ind(x\in A^c) \eta(x) + \ind(x\in A)(1-\eta(x))\right]\,.
\end{align*}
By using the identity $\ind(x\in A)+\ind(x \in A^c)=1$, we can simplify the above equation to arrive at the following:
\begin{align}\label{eq: tmp10-2}
\SR(h)&=\E_x\left[\ind(x\in A^c)(2\eta(x)-1) +1-\eta(x)\right]\,.
\end{align}
Note that the above equation holds for all classifiers $h$. In particular, we can employ it for $h^*$. For this purpose, we know that the subset of $\reals^d$ with the negative label is ${A^*}^{c}=\{x:\eta(x)\leq 1/2\}$. By recalling  \eqref{eq: tmp10-2} for $h^*$ we get: 
\begin{align}\label{eq: tmp10-3}
\SR(h^*)=\E_x\left[\ind(\eta(x)\leq 1/2) (2\eta(x)-1) + 1-\eta(x) \right]\,.
\end{align}
We can subtract  \eqref{eq: tmp10-3} from \eqref{eq: tmp10-2}, and then exploit \eqref{eq: tmp10} with substituting $B$ with $A^c$ to get $\SR(h)\geq\SR(h^*)$. This completes the proof.

\subsection{Proof of Corollary \ref{coro: optimals}}
In order to characterize the Bayes-optimal classifier of the Gaussian mixture setting \eqref{eq: gaussian-mix}, we first need to compute the conditional density function $\prob(y=+1|x)$. This will help us use Proposition \ref{propo: bayes} to identify the Bayes-optimal classifiers. For this purpose, in the first step, consider the general Gaussian mixture model  $\tx\sim \normal(y\tmu, \Sigma)$, where the covariance matrix $\Sigma$ is not necessarily full-rank. This means that $\tx$ can be a degenerate multivariate Gaussian with the following density function:
\begin{equation}\label{eq: tmp13-1}
f_{\tx|y}(\tx)=\left(|2\pi \Sigma|_+\right)^{-1/2}\exp\left(-(\tx-y\tmu)^\sT\Sigma^\dagger (\tx-y\tmu)/2\right)\,,
\end{equation}
where $|.|_+$ stands for the pseudo-determinant operator. By recalling the Bayes' theorem we get

\begin{align}\label{eq: tmp13}
\prob(y=+1|\tx)&=\frac{\prob(y=+1)f_{\tx|y=+1}(\tx) }{\prob(y=+1)f_{\tx|y}(\tx)+\prob(y=-1)f_{\tx|y=-1}(\tx)  }\nonumber\\
&=\frac{1}{1 +\left((1-\pi)/\pi\right)\cdot f_{\tx|y=-1}(\tx)/ f_{\tx|y=+1}(\tx)  }\nonumber\,.
\end{align}
By using \eqref{eq: tmp13-1} in the last equation, we will arrive at the following
\begin{equation}\label{eq: tmp13}
\prob(y=+1|\tx)=\left(1+\exp(-2\tx^\sT\Sigma^\dagger\tmu+q)\right)^{-1}\,,
\end{equation}
where $q=\log \frac{1-\pi}{\pi}$. On the other hand, it is easy to observe that 
\begin{equation}\label{eq: tmp12}
\sign((1+\exp(-t))^{-1}-1/2)=\sign(t) \,.
\end{equation}
Finally, we can deploy \eqref{eq: tmp13} in Proposition \ref{propo: bayes} in conjunction with the identity \eqref{eq: tmp12} to derive the Bayes-optimal classifier $\widetilde{h}^*=\sign(\tx^\sT\Sigma^\dagger\tmu -q/2)$. We can now focus on the primary setup \eqref{eq: gaussian-mix}. Let $\Sigma:=WW^\sT$, $\tx=\vph^{-1}(x)$, and $\tmu:=W\mu$. It is easy to check that with these new notations we have $\tx\sim\normal(y\tmu, \Sigma)$. Recall the Bayes-optimal classifier of this setting  $h^*(x)=\sign(\tx^\sT\Sigma^\dagger\tmu -q/2)$. By replacing $\tx$, $\Sigma$, and $\tmu$ by their respective definitions $\vph^{-1}(x)$, $WW^\sT$, and $W\mu$ we realize that the Bayes-optimal classifier is given by
$$h^*(x)= \sign\left(\vph^{-1}(x)^{\sT} (WW^{\sT})^{\dagger} W\mu-q/2\right)\,.$$ 

In this part we want to characterize the Bayes-optimal classifier of the generalized linear model \eqref{eq: generative}. To this end, note that if the proposed classifier in Corollary \ref{coro: optimals} is optimal for the identity map $\vph(x)=x$, then we can simply consider $\tx=\vph^{-1}(x)$, and establish its optimality for every function $\vph$. This means that we can only focus on the case with the identity function $\vph(x)=x$. For the purpose of identifying the Bayes-optimal classifiers, we can use Proposition \ref{propo: bayes}. In the first step, we need to compute the conditional probability $\prob(y=1|x)$. By using the Bayes rule we get

\begin{align*}
\prob(y=+1|X=x)&=
\frac{\int\prob_{X|Z}(x|z)\prob_{Y|X,Z}(+1|x,z) d\prob_Z(z)}{\int\prob_{X|Z}(x|z)d\prob_Z(z)}\\
&=\frac{\int\ind\left(Wz=x\right)\prob_{Y|Z}(+1|z) d\prob_Z(z)}{\int\ind\left(Wz=x\right)d\prob_Z(z)}\,,
\end{align*}
where in the last equation, we used the fact that condition on $z$, the feature vector $x$ and the label $y$ are independent. In addition, as $W$ is a full-rank matrix (linearly independent columns), then for a fixed $x$, equation $Wz=x$ has the unique solution $z^*=(W^\sT W)^{-1}W^{\sT}x$. This gives us
\begin{align*}
\prob(y=+1|X=x)&=\prob_{Y|Z}(+1|z^*)\\
&=f(\beta^\sT (W^\sT W)^{-1}W^{\sT}x)\,.
\end{align*}    
By recalling Proposition \ref{propo: bayes}, we realize that the Bayes-optimal classifier of this setting with $\vph(t)=t$ is given by  

$$h^*(x)=\sign\left(f(\beta^\sT (W^\sT W)^{-1}W^{\sT}x)-1/2\right)\,.$$
This completes the proof.

\subsection{Proof of Theorem \ref{thm: Gaussian-mix}}\label{proof: thm: Gaussian-mix}
We first show that if the result holds for $\ell_2$ adversaries, then the theorem is also true for $\ell_p$ norm bounded adversaries with power $\eps_p$. Indeed this result can be seen as an immediate consequence of  
\begin{equation}\label{eq: tmp-holder}
\{x': \pnorm{x'-x}\leq \eps_p\} \subseteq B_\eps(x)\,,
\end{equation}
where $\eps=\eps_p d^{\frac{1}{2}-\frac{1}{p}}$. More precisely, the boundary risk of $\ell_p$ adversaries with power $\eps_p$ will be smaller than $\ell_2$ adversaries with power $\eps_p d^{\frac{1}{2}-\frac{1}{p}}$ which completes the proof. In this step, we only need to prove \eqref{eq: tmp-holder}. For this end,  from the Holder's inequality for every $u,v\in \reals^d$ and $r\geq1$ we have:

\[
\sum_{i=1}^d |u_i||v_i| \leq \left(\sum_{i=1}^{d}|u_i|^{r}\right)^{\frac{1}{r}} \left(\sum_{i=1}^{d}|v_i|^{\frac{r}{r-1}}\right)^{1-\frac{1}{r}}\,.
\]  
By using $u_i=|x'_i-x_i|^2$, $v_i=1$, and $r=p/2$ (note that $p\geq 2$) in the above inequality we get $\twonorm{x'-x}\leq \pnorm{x'-x}d^{{1}/{2}-{1}/{p}}$, which yields \eqref{eq: tmp-holder}.

We next focus on showing the result for $\ell_2$ norm bounded adversaries with power $\eps$. For this end, introduce $\rho(.)=\vph^{-1}(.)$, $\Sigma=WW^\sT$, $\tx=\rho(x)$,  and $\tmu=W\mu$. From Corollary \ref{coro: optimals}, we know that the Bayes-optimal classifier is given by 
\begin{equation} \label{eq: tmp3-1}
h^*(x)= \sign\left(\rho(x)^{\sT} (WW^{\sT})^{\dagger} W\mu-q/2\right)\,.
\end{equation}
We next focus on computing the boundary risk of the Bayes-optimal classifier $h^*$. By recalling the boundary risk definition we get:
\begin{align*}
\BR(h^*)&= \prob_{x,y}\left( h^*(x)y\geq 0,  \inf\limits_{u \in B_\eps(x)}^{} h^*(x)h^*(u) \leq 0 \right)\,.
\end{align*}
In the next step, we expand the above expression for the two possible values of $y\in \{+1,-1\}$ to get 
\begin{align*}
\BR(h^*)&=\prob_{x,y}\left( h^*(x)\geq 0,  \inf\limits_{u \in B_\eps(x)}^{}h^*(u) \leq 0 , y=1\right)\\
&\;+\prob_{x,y}\left( h^*(x)\leq 0,  \sup\limits_{u\in B_\eps(x)}^{} h^*(u) \geq 0 , y=-1\right) \nonumber\,.
\end{align*}
We then plug \eqref{eq: tmp3-1} into the last equation to get 
\begin{align*}
\BR(h^*)&\leq \prob_{x,y} \big( \inf\limits_{ u\in B_\eps(x)}\tmu^{\sT} \Sigma^{\dagger} \rho(u) \leq \tfrac{q}{2}\leq  \tx^\sT \Sigma^{\dagger}\tmu,\; y=1\big)\\
&\;+\prob_{x,y} \big(  \tx^\sT \Sigma^{\dagger}\tmu \leq \tfrac{q}{2}\leq \sup\limits_{u\in B_\eps(x)} \tmu^\sT \Sigma^{\dagger}\rho(u),\; y=-1\big) \,.
\end{align*}
By our assumption, 
$d\vph/dt\geq c$, for some constant $c>0$.  By simple algebraic manipulation it is easy to check that 
$\rho\left(B_\eps(x)\right)$ is a subset of $B_{\eps/c}(\rho(x))$. This gives us
\begin{align*}
\BR(h^*)&\leq \prob_{x,y}( \inf\limits_{ v \in B_{\eps/c}(\tx)} \tmu^{\sT} \Sigma^{\dagger} v \leq \tfrac{q}{2}\leq  \tx^\sT \Sigma^{\dagger}\tmu, y=+1)\\
&\;+\prob_{x,y} (\tx^\sT \Sigma^{\dagger}\tmu \leq \tfrac{q}{2}\leq \sup\limits_{v \in B_{\eps/c}(\tx)} \tmu^\sT \Sigma^{\dagger}v, y=-1) \,.
\end{align*}
The inner minimization in the above expression can be solved in closed form, by which we obtain 
%
\begin{align*}
\BR(h^*)&\leq \prob_{x,y} \big(\tfrac{q}{2} \leq  \tx^\sT \Sigma^{\dagger}\tmu\leq \tfrac{q}{2}+ \tfrac{\eps}{c}\twonorm{\Sigma^{\dagger}\tmu} , y=1\big)\\
&\;+\prob_{x,y} \big(\tfrac{q}{2}-\tfrac{\eps}{c}\twonorm{\Sigma^{\dagger}\tmu}  \leq  \tx^\sT \Sigma^{\dagger}\tmu \leq \tfrac{q}{2}, y=-1 \big) \,.
\end{align*}

From the Gaussian mixture model \eqref{eq: gaussian-mix} we know that $x\sim \normal(y\tmu,\Sigma)$. For $\tx_+\sim\normal(\tmu,\Sigma)$ and $\tx_-\sim \normal(-\tmu,\Sigma)$ this implies that $\tx|y=+1 \sim \tx_+$, and $\tx|y=-1\sim \tx_-$. By conditioning the above expression on $y$ we get:
\begin{align*}
\BR(h^*)&\leq\pi\; \prob_{\tx_+} \left( 0 \leq \tx_+^\sT \Sigma^{\dagger}\tmu -\tfrac{q}{2} \leq \eps \|\Sigma^{\dagger}\tmu\|/c   \right) \\
&\;+(1-\pi) \; \prob_{\tx_-}\left( - \tfrac{\eps}{c} \|\Sigma^{\dagger}\tmu\|   \leq \tx_-^\sT \Sigma^{\dagger}\tmu -\tfrac{q}{2} \leq 0 \right)\,.
\end{align*}
Since $\tx_+^\sT \Sigma^{\dagger}\tmu \sim \normal(a,a)$, and  $\tx_-^\sT \Sigma^{\dagger}\tmu \sim \normal(-a,a)$ with $a=\tmu^\sT\Sigma^{\dagger}\tmu$, we have 
\begin{align*}
\BR(h^*) &\leq \pi\; \prob_{u\sim \normal(a, a)} \left( 0 \leq u-\tfrac{q}{2} \leq \tfrac{\eps}{c} \|\Sigma^{\dagger}\tmu\|   \right) \\
&\;+ (1-\pi)\; \prob_{u\sim \normal(-a,a) }\left(  - \tfrac{\eps}{c} \|\Sigma^{\dagger}\tmu\| \leq u -\tfrac{q}{2} \leq 0 \right)\,.
\end{align*}
We bound the above probabilities, using the fact that the pdf of normal $\normal(\nu,\sigma)$ is bounded by $1/\sqrt{2\pi}\sigma$, for any values of $\sigma>0$ and $\nu$. This implies that
 \begin{align}
\BR(h^*)\leq  \frac{\eps||\Sigma^{\dagger}\tmu||}{c\sqrt{2\pi}||\Sigma^{\dagger 1/2}\tmu||}\,.\label{eq: tmp1}
\end{align}
In the next step, by using $\|\Sigma^{\dagger 1/2}\|=1/\sigma_{\min}(W)$ we arrive at 
 \begin{align*}
{c\sqrt{2\pi}\BR(h^*)}\leq  \frac{\eps}{\sigma_{\min}(W)}\,.
\end{align*}

This along with $\eps/\sigma_{\min}(W)=o_d(1)$ assumption as $d$ grows to infinity completes the proof.




\subsection{Proof of Proposition \ref{propo: example}}\label{proof: propo: example}
From the definition of boundary risk in \eqref{eq: BR} we have
\[
\BR(h)=\prob\left(h(x)y\geq 0, \inf\limits_{x'\in B_\eps(x)}^{} h(x')h(x)\leq 0\right)\,.
\]
Note that the above probability involves the randomness of $\mu\sim \normal(0,I_k/k)$, and entries of $W$ being $\normal(0,1/k)$. In the first step, for the classifier $h(x)=\sign(x^\sT e_1)$, expand the boundary risk for each possible values of $y\in \{+1,-1\}$ to get 
\begin{align*}
\BR(h)&=\prob\left(x^\sT e_1\geq 0, \inf\limits_{\pnorm{x'-x}\leq \eps_p}^{} \sign(x'^\sT e_1) \leq 0, y=+1\right)\\
&\;+\prob\left(x^\sT e_1\leq 0, \sup\limits_{\pnorm{x'-x}\leq \eps_p}^{} \sign(x'^\sT e_1) \geq 0, y=-1\right)\,.
\end{align*}
In the inner optimization problem, since only the first coordinate of $x'$ is on the scene, it is easy to obtain the optimal adversarial perturbation. This gives us
\begin{align*}
\BR(h)&=\prob\left(0 \leq x^\sT e_1\leq \eps_p, y=+1\right)\\
&\;+\prob\left( -\eps_p \leq x^\sT e_1\leq 0, y=-1  \right)\,.
\end{align*}
Note that in the setting \eqref{eq: gaussian-mix}, $\vph(t)=t$ hence we have $x|y \sim \normal(yW\mu, WW^\sT)$. By conditioning on the $W,\mu$ we get  
\begin{align*}
\BR(h)&=\pi\cdot\E_{W,\mu}\left[\prob_{x\sim \normal(W\mu, WW^\sT) }\left(0 \leq x^\sT e_1\leq \eps_p\right)\right]\\
&\;+(1-\pi)\cdot \E_{W,\mu}\left[\prob_{x\sim \normal(-W\mu, WW^\sT)}\left( -\eps_p \leq x^\sT e_1\leq 0\right)\right]\,.
\end{align*}
We next denote the first column of $W^\sT$ by $\omega\in \reals^k$. It is easy to observe that conditioned on $\mu$ and $\omega$, the linear term $x^\sT e_1$ has a Gaussian distribution with the mean $\mu^\sT \omega$, and the unit variance (recall that $\omega$ lies on the unit $k$ dimensional sphere). This brings us
\begin{align*}
\BR(h)&=\pi\cdot\E_{\mu,\omega}\left[ \prob_{u\sim \normal(\mu^\sT \omega,1)}(0\leq u \leq \eps_p)\right]\\
&\;+(1-\pi)\cdot\E_{\mu,\omega}\left[ \prob_{u\sim \normal(-\mu^\sT \omega,1)}(-\eps_p \leq u \leq 0)\right] \,.
\end{align*}
We then use the standard normal c.d.f.  $\Phi$ to rewrite the above probabilities. This gives us the following:
\begin{align*}
\BR(h)&=\pi\cdot\E_{\mu,\omega}\left[ \Phi(\eps_p-\mu^\sT\omega)-\Phi(-\mu^\sT\omega)\right]\\
&\;+(1-\pi)\cdot\E_{\mu,\omega}\left[ \Phi(\mu^\sT\omega)-\Phi(-\eps_p+\mu^\sT \omega)  \right] \\
&=\E_{\mu,\omega}\left[\Phi\left({\eps_p-\mu^\sT\omega}{}\right) - \Phi\left({-\mu^\sT\omega}{}  \right)   \right]\,,
\end{align*}
where the last equation comes from $~\Phi(-t)=1-\Phi(t)$, for every real value $t$. 
We next use $\mu\sim \normal(0,I_k/k)$ and $\twonorm{\omega}^2=1$. This implies that conditioned on $\omega$,  the random value $\mu^\sT\omega$ has $\normal(0,1/k)$ distribution. In the next step, by using the law of iterated expectations we get 
 \begin{align*}
\BR(h)&=\E_{\mu,\omega}\left[\Phi\left({\eps_p-\mu^\sT\omega}{}\right) - \Phi\left({-\mu^\sT\omega}{}  \right)   \right]\\
&=\E_{\omega}\left[\E\left[\Phi\left({\eps_p-\mu^\sT\omega}{}\right) - \Phi\left({-\mu^\sT\omega}{}  \right)   \right]\big |\omega\right]\\
&=\E_{z\sim \normal(0,1/k)}\left[\Phi\left({\eps_p}{}+z\right) - \Phi(z)  \right] \\
&\geq \E_{z\sim \normal(0,1)}\left[\Phi\left({\eps_p}{}+z\right) - \Phi(z)  \right]=c_{\eps_p}\,,
 \end{align*}
 where the last inequality follows from Lemma \ref{lemma: Phi-decreasing}. It is worth to mention that $c_{\eps_p}$ is a deterministic value, which is independent of the dimensions $k,d$. We next present lemma \ref{lemma: Phi-decreasing} along with its proof. 
\begin{lemma} \label{lemma: Phi-decreasing}
For every nonnegative $\eps$, the following function is nonincreasing in $\sigma>0$:
\[
g(\sigma):=\E_{z\sim \normal(0,1)}\left[ \Phi(\eps+\sigma z) - \Phi(\sigma z)\right]\,.
\]
\end{lemma}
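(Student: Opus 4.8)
The plan is to evaluate $g(\sigma)$ in closed form using a Gaussian convolution identity, after which the monotonicity claim becomes transparent. First I would observe that each of the two expectations defining $g$ has the form $\E_{z\sim\normal(0,1)}[\Phi(a+\sigma z)]$ for a constant $a$ (namely $a=\eps$ and $a=0$). To handle such a term, I would introduce an auxiliary standard normal $W\sim\normal(0,1)$ independent of $z$ and write $\Phi(a+\sigma z)=\prob(W\le a+\sigma z\mid z)$, so that taking expectation over $z$ gives $\E_z[\Phi(a+\sigma z)]=\prob(W-\sigma z\le a)$.

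The key step is then to identify the law of $W-\sigma z$: as a linear combination of two independent standard normals it is $\normal(0,1+\sigma^2)$, which yields the identity
\[
\E_{z\sim\normal(0,1)}[\Phi(a+\sigma z)]=\Phi\!\left(\frac{a}{\sqrt{1+\sigma^2}}\right)\,.
\]
Applying this with $a=\eps$ and $a=0$ and subtracting collapses the definition to the single expression $g(\sigma)=\Phi\big(\eps/\sqrt{1+\sigma^2}\big)-\tfrac12$.

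From here the conclusion is immediate: since $\eps\ge0$, the argument $\eps/\sqrt{1+\sigma^2}$ is nonincreasing in $\sigma>0$ because the denominator strictly increases, and $\Phi$ is increasing, so $g$ is nonincreasing as the composition of a nondecreasing map with a nonincreasing one. This is precisely the comparison invoked in the proof above, where the smaller variance ($1/k\le1$) produces the larger value.

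The only nonroutine point — which I regard as the main (and rather minor) obstacle — is recognizing that the two terms should not be differentiated separately but combined through the convolution identity, which linearizes the whole problem and avoids any bookkeeping with the density $\phi$ or integration by parts. A differentiation-based route, signing $g'(\sigma)=\E_z[z(\phi(\eps+\sigma z)-\phi(\sigma z))]$ via Stein's lemma, would also succeed but is messier; the convolution approach needs only Fubini, whose use is trivially justified since $\Phi$ is bounded.
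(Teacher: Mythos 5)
Your proof is correct, but it takes a genuinely different route from the paper. The paper's argument is differential: it differentiates under the expectation to get $g'(\sigma)=\E_{z\sim \normal(0,1)}\left[z\vph(\eps+\sigma z)-z\vph(\sigma z)\right]$ (with $\vph$ the standard normal pdf), kills the second term by oddness, then folds the remaining integral onto $z\geq 0$ and signs the integrand via $e^{-\eps\sigma z}\leq e^{\eps\sigma z}$. Your argument instead evaluates $g$ in closed form through the Gaussian smoothing identity $\E_{z\sim\normal(0,1)}\left[\Phi(a+\sigma z)\right]=\Phi\left(a/\sqrt{1+\sigma^2}\right)$, obtained by writing $\Phi(a+\sigma z)=\prob(W\leq a+\sigma z\mid z)$ for an independent standard normal $W$ and noting $W-\sigma z\sim\normal(0,1+\sigma^2)$; this collapses the lemma to the monotonicity of $\sigma\mapsto\Phi\bigl(\eps/\sqrt{1+\sigma^2}\bigr)-\tfrac12$, which is immediate. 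Your approach buys more than the paper's: it needs only Fubini (trivially justified since $\Phi$ is bounded) rather than an unjustified differentiation under the integral sign, and it yields an exact formula for $g$, from which one also reads off strict decrease for $\eps>0$ and the limiting values as $\sigma\to 0$ and $\sigma\to\infty$. The paper's route, by contrast, stays elementary and self-contained (no auxiliary variable or convolution fact), at the cost of a slightly heavier computation and of delivering only the sign of the derivative rather than the function itself. Both correctly support the application in Proposition \ref{propo: example}, where the comparison is between variance $1/k\leq 1$ and variance $1$.
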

\begin{proof}[Proof of Lemma \ref{lemma: Phi-decreasing}]
Let $\vph(.)$ denote the standard normal pdf. We will show that $g(\sigma)$ has a nonpositive derivative. 
\begin{align*}
\frac{\partial}{\partial \sigma}g(\sigma)&= \E_{z\sim \normal(0,1)}\left[ z\vph(\eps+\sigma z)-z\vph(\sigma z) \right]\\
&= \E_{z\sim \normal(0,1)}\left[ z\vph(\eps+\sigma z)\right]\,,
\end{align*}
where the last equality follows from the fact that $z\vph(\sigma z)$ is an odd function. In the next step, by rewriting the above relation in terms of positive values of $z$, we arrive at

\[
\frac{\partial}{\partial \sigma}g(\sigma)=\frac{1}{\sqrt{2\pi}}\int_{0}^{+\infty}z e^{-\frac{\eps^2+\sigma^2z^2}{2}}\left( e^{-\eps\sigma z}-e^{\eps \sigma z} \right)\vph(z)dz\,.
\]
By noting that for $z\geq0$ we have $e^{-\eps\sigma z}\leq e^{\eps \sigma z}$, we realize that the derivative of $g$ is nonpositive. This completes the proof.
\end{proof}

\subsection{Proof of Theorem \ref{thm: generative_new}}\label{proof: thm: generative-new}
 It is easy to observe that a similar argument in the proof of Theorem \ref{thm: Gaussian-mix} can be adopted here to show that if the result holds for $p=2$, it must hold for $p\geq 2$. It is inspired by the fact that for $p\geq 2$ we have $\twonorm{x'-x}\leq \pnorm{x'-x}d^{\frac{1}{2}-\frac{1}{p}}$. More details can be seen in Section \ref{proof: thm: Gaussian-mix}. We now focus on proving the theorem for $\ell_2$-bounded adversaries of power $\eps$. In the GLM setting, from Corollary \ref{coro: optimals}, the Bayes-optimal classifier is given by
\begin{equation}\label{eq: tmp-generative-1}
h^*(x)=\sign\left(f(\beta^\sT \tW^\sT \tx)-1/2\right)\,,
\end{equation}
where $\tW=W(W^\sT W)^{-1}$, and $\tx=\vph^{-1}(x)$.
We focus on computing the boundary risk of $h^*$.  In the first step, from the boundary risk definition we have
\begin{align*}
\BR(h^*)=\prob\left(h^*(x)y\geq 0, \inf\limits_{u\in B_ \eps(x)}^{} h^*(x)h^*(u)\leq 0\right) \,.
\end{align*}
By conditioning on the value of $y$ we get
\begin{align*}
\BR(h^*)&= \prob\left(y=+1, h^*(x)\geq 0, \inf\limits_{u\in B_\eps(x)}^{} h^*(u)\leq 0\right)\\
&\;+\prob\left(y=-1, h^*(x)\leq 0, \sup\limits_{u\in B_\eps(x)}^{} h^*(u)\geq 0\right)\,.
\end{align*}
 By removing conditions $y=+1$ and $y=-1$, we can upper bound the above probabilities. This gives us 
\begin{align*}
\BR(h^*)&\leq \prob\left(h^*(x)\geq 0, \inf\limits_{u\in B_\eps(x)}^{} h^*(u)\leq 0\right)\\
&\;+\prob\left(h^*(x)\leq 0, \sup\limits_{u\in B_\eps(x)}^{} h^*(u)\geq 0\right)\,.
\end{align*}
In the next step, using \eqref{eq: tmp-generative-1} yields

\begin{align}\label{eq: tmp-generative-1}
\BR(h^*)&\leq\prob\left(\inf\limits_{u\in B_\eps(x)}^{} f\left(\vph^{-1}(u)^{\sT}\tW\beta\right) \leq 1/2 \leq f( \tx^\sT\tW\beta  ) \right) \nonumber\\
&\;+\prob\left(f(\tx^\sT\tW\beta  )\leq 1/2\leq \sup\limits_{u\in B_\eps(x)}^{} f\left(\vph^{-1}(u)^\sT\tW\beta\right) \right)\,.
\end{align}

From the manifold models in Section \ref{sec:data-model}, we know that there exists $c>0$ such that the derivative of $\vph$ satisfies $d\vph/dt\geq c$. This implies that $\vph^{-1}\left(B_\eps(x)\right)$ is a subset of $B_{\eps/c}\left(\vph^{-1}(x)\right)$. Using this in \eqref{eq: tmp-generative-1} brings us 
\begin{align}\label{eq: tmp-generative-2}
\BR(h^*) &\leq \prob\left(\inf\limits_{v\in B_{\eps/c}(\tx) }^{} f\left(v^{\sT}\tW\beta\right) \leq 1/2 \leq f( \tx^\sT\tW\beta) \right) \nonumber\\
&\;+ \prob\left(f(\tx^\sT\tW\beta  )\leq 1/2 \leq \sup\limits_{v\in B_{\eps/c}(\tx) }^{} f\left(v^\sT\tW\beta\right)   \right)\,.
\end{align}

 Function $f$ is an increasing function, therefore the inner optimization problems can be cast with linear objectives and bounded $\ell_2$-ball constraints. This means that it is feasible to characterize a closed form solution for the optimization problem. For this end, we first introduce $c_0=f^{-1}(1/2)$ and rewrite \eqref{eq: tmp-generative-2} as
\begin{align}\label{eq: tmp12-1}
\BR(h^*)&\leq \prob\left(c_0 \leq  \tx^\sT\tW\beta \leq c_0+ \eps\twonorm{\tW\beta}/c  \right)  \nonumber\\
&\;+\prob\left(c_0-\eps\twonorm{\tW\beta}/c \leq  \tx^\sT\tW\beta  \leq c_0  \right).
\end{align}
As $\tx=Wz$, therefore $\tx\sim \normal(0,WW^\sT)$. This implies that $\tx^\sT\tW\beta \sim \normal(0,\twonorm{\beta}^2 )$, and the probabilities in \eqref{eq: tmp12-1} can be computed by standard normal cdf $\Phi$. By Letting $\gamma=\tW\beta$, we arrive at
\begin{align*}
\BR(h^*)\leq  \Phi\left(\frac{c_0+\eps\twonorm{\gamma}/c}{\twonorm{\beta}} \right) - \Phi \left(\frac{c_0-\eps\twonorm{\gamma}/c}{\twonorm{\beta}}\right) \,.
\end{align*}
By using the fact that $\Phi$ is $1/\sqrt{2\pi}$-Lipschitz continuous, we get 
\begin{align}\label{eq: tmp12-2}
\BR(h^*)&\leq \frac{2\eps\twonorm{\tW\beta}}{c\sqrt{2\pi}\twonorm{\beta}}\,.
\end{align}
In the next step, from $\tW^\sT \tW=(W^\sT W)^{-1}$ we realize that $\twonorm{\tW\beta}\leq \twonorm{\beta}/\sigma_{\min}(W)$. Using this in \eqref{eq: tmp12-2} yields

\begin{align*}
\BR(h^*)&\leq \frac{2\eps}{c\sqrt{2\pi}\sigma_{\min}(W)}\,.
\end{align*}
Deploying $\eps/\sigma_{\min}(W)=o_d(1)$ completes the proof.

\subsection{Proof of Theorem \ref{thm: agnostic}}\label{proof:thm: agnostic}
 Similar to the proof of theorems in the previous sections,  an analogous argument can be used here to show that if the result holds for $p=2$, it must hold for $p\geq 2$.  In  short, it comes from the fact that  for $p\geq 2$ we have $\twonorm{x'-x}\leq \pnorm{x'-x}d^{\frac{1}{2}-\frac{1}{p}}$. More details can be seen in Section \ref{proof: thm: Gaussian-mix}. We now focus on proving the theorem for $\ell_2$-bounded adversaries of size $\eps$.
By recalling the definition of the boundary risk we get
\[
\BR(h_\th)=\prob(h_\th(x)y\geq 0, \inf\limits_{x'\in B_{\eps}(x)}^{} h_\th(x')h_\th(x)\leq 0)\,.
\]
We expand the above probabilities with respect to the $y$ possible values $\{+1,-1\}$. This gives us
\begin{align*}
\BR(h_\th)&=\prob(y=+1,h_\th(x)\geq 0, \inf\limits_{x'\in B_{\eps}(x)}^{} h_\th(x')\leq 0)\\
&\;+\prob(y=-1, h_{\th}(x)\leq 0, \sup\limits_{x'\in B_{\eps}(x)}^{} h_\th(x')\geq 0 )\,.
\end{align*}
 Plugging $h_\th(x)=\sign(x^\sT\th)$ into to the above expression yields
\begin{align*}
\BR(h_\th)&=\prob(y=+1,x^\sT\th\geq 0, \inf\limits_{x'\in B_{\eps}(x)}^{} \th^\sT x'\leq 0)\\
&\;+\prob(y=-1, x^\sT \th \leq 0, \sup\limits_{x'\in B_{\eps}(x)}^{} \th^\sT x'\geq 0 )\,.
\end{align*}
By solving the inner optimizations we can get the following
 \begin{align*}
\BR(h_\th)&=\prob(y=+1, 0\leq x^\sT \th\leq \eps\twonorm{\th})\\
&\;+\prob(y=-1, -\eps\twonorm{\th}\leq x^\sT\th\leq 0  )\,.
\end{align*} 
 From the Gaussian mixture model \eqref{eq: gaussian-mix} we know that $x|y\sim \normal(yW\mu,WW^\sT)$.  By using this Gaussian distribution along with conditioning on values of $y$, we get
\begin{align*}
\BR(h_\th)&=\pi\cdot \prob_{x\sim \normal(W\mu, WW^\sT)}(0\leq x^\sT \th\leq \eps\twonorm{\th})\\
&\;+(1-\pi)\cdot \prob_{x\sim \normal(-W\mu, WW^\sT)}( -\eps\twonorm{\th}\leq x^\sT \th\leq 0 )\,.
\end{align*}
As $x$ has multivariate Gaussian distribution, we can rewrite the above probabilities in terms of the standard normal cdf $\Phi$. In addition, by using $\Phi(-t)=1-\Phi(-t)$, we get
\[
\BR(h_\th)= \Phi\left( \frac{\eps\twonorm{\th}-\th^\sT W\mu}{\twonorm{W^\sT\th}} \right) -\Phi\left( \frac{-\th^\sT W\mu}{\twonorm{W^\sT\th}} \right)\,. 
\]

In the next step, by using the fact that the normal cdf $\Phi$ is $1/\sqrt{2\pi}$ Lipschitz continuous we arrive at
 \begin{equation}\label{eq: agnostic-tmp1}
 \BR(h_\th)\leq \frac{\eps\twonorm{\th} }{\sqrt{2\pi}\twonorm{W^\sT\th}}\,.
 \end{equation}

Projection of the decision parameter $\th$ onto the kernel of the weight matrix $W$ gives the decomposition $\th=\th_0+W\alpha$,  for $\alpha\in \reals^k, \th_0\in \reals^d$ with $\th_0=P_{\mathsf{ker}(W^\sT)}(\th)$. Using $\twonorm{\th}^2=\twonorm{\th_0}^2+\twonorm{W\alpha}^2$ in \eqref{eq: agnostic-tmp1} yields  

\begin{align*}
 \BR(h_\th)\leq \frac{\eps\twonorm{W\alpha} }{\sqrt{2\pi}\twonorm{W^\sT\th}}{\left({1-\frac{\twonorm{\th_0}^2}{\twonorm{\th}^2}}\right)^{-1/2}} \,.
\end{align*}

We then use $W^\sT\th=W^\sT W\alpha$ and the operator norm property $\twonorm{W\alpha}\sigma_{\min}(W)\leq \twonorm{W^\sT W\alpha}$. This brings us the following

\begin{align*}
 \BR(h_\th)\leq \frac{\eps}{\sqrt{2\pi}\sigma_{\min}(W)} {\left({1-\frac{\twonorm{\th_0}^2}{\twonorm{\th}^2}}\right)^{-1/2}}\,.
\end{align*}

Finally, deploying the problem assumption \eqref{eq: agnostic-conditons} completes the proof. 

In this part, we focus on the ERM problem. First, note that as the loss function $\ell$ is decreasing, it is easy to observe that the supremum of $yu^\sT\th$ over the adversarial ball $u \in B_\eps(x_i)$ is $\ell(yx^\sT\th-\eps\twonorm{\th})$. This implies that the adversarial ERM problem can be written as the following:
\begin{equation}\label{eq: agnostic-ERM}
 \arg\min\limits_{\th\in {\reals^d}}^{} R_n(\th):=\frac{1}{n}\sum\limits_{i=1}^{n} \ell(y_ix_i^\sT\th-\eps\twonorm{\th})\,.
\end{equation}
In order to show that the classifier $h_{\hth^\eps}$ has boundary risk converging to zero, we use the first part of the theorem. For this purpose, we will show that indeed the obtained solution $\hth^\eps$ has no component in the kernel space of the matrix $W$. In this case, the general condition \eqref{eq: agnostic-conditons} will reduce to $\frac{\eps_pd^{\frac{1}{2}-\frac{1}{p}}}{\sigma_{\min}(W)}=o_d(1)$. This means that we only need to prove that $P_{\mathsf{Ker}(W^\sT)}(\hth^\eps)=0$. For this end,  by projecting $\hth^\eps$ onto the kernel of $W$ we get
$\hth^\eps=\th_0+\th_1$ with $W^\sT\th_0=0$, and there exists $\alpha\in \reals^k$ such that $\th_1=W\alpha$.
We want to show that $\th_0=0$. Assume that $\twonorm{\th_0}> 0$, we show this will contradict the fact that $\hth^\eps$ is a minimizer of $R_n(\th)$.  By plugging $\hth^\eps=\th_0+\th_1$ into \eqref{eq: agnostic-ERM} we get
\begin{align*}
R_n(\hth^\eps) &= \frac{1}{n}\sum\limits_{i=1}^{n} \ell(y_ix_i^\sT\hth^\eps-\eps\|\hth^\eps\|_2)\\
&=\frac{1}{n}\sum\limits_{i=1}^{n} \ell(y_ix_i^\sT\th_1-\eps\|\hth^\eps\|_2)\,,
\end{align*}
where the last equality follows by the fact that $x_i=Wz_i$ with $z_i\sim \normal(y\mu_i,I_k)$, and $W^\sT\th_0=0$. In addition, from the orthogonality of $\th_0$ and $\th_1$ we get
\begin{align*}
R_n(\hth^\eps)= \frac{1}{n}\sum\limits_{i=1}^{n} \ell\left(y_ix_i^\sT\th_1-\eps \left( \twonorm{\th_0}^2+\twonorm{\th_1}^2\right)^{1/2}  \right)\,.
\end{align*}
Finally, since the loss function $\ell$ is strictly decreasing, therefore for $\twonorm{\th_0}>0$ we get 
\begin{align*}
R_n(\hth^\eps)&> \frac{1}{n}\sum\limits_{i=1}^{n} \ell\left(y_ix_i^\sT\th_1-\eps \twonorm{\th_1}  \right)\\
&=R_n(\th_1)\,.
\end{align*}
This contradicts the initial assumption that $\hth^\eps$ is a minimizer of $R_n(\th)$. This means that $\th_0=0$, and completes the proof.

\if false
\section{Additional numerical experiments}
In this section, we provide additional numerical experiments to corroborate our claim that the boundary risk converges to zero when the dimensions ratio $d/k$ grows to infinity. For this purpose, similar to the setting of the numerical experiment presented in Figure \ref{fig:main}, we focus on the class of  linear classifiers $\sign(x^\sT \th)$ with model $\th$ coming from the robust ERM problem \eqref{eq: tmp9}. 
We construct a dataset of $n=300$ samples, generated from the Gaussian mixture model \eqref{eq: gaussian-mix} with balanced classes ($p=1/2$). In addition, we assume that the weight matrix $W$ and the mean vector $\mu$ have i.i.d. entries $\normal(0, 1/k)$.  We fix the ambient dimension at $d=100$, and vary the manifold dimension $k$ from $1$ to $100$. We consider four different feature mappings: $(i)$ $\vph_1(t)=t$, $(ii)$ $\vph_2(t)=t/4+\sign(t) 3t/4$, $(iii)$ $\vph_3(t)=t+\sign(t)t^2$, and $(iv)$ $\vph_4(t)=\tanh(t)$. The plots in Figure \ref{fig:app-1} showcase the behavior of the standard, adversarial, and the boundary risks for each of these mappings and for the adversary's power $\eps = 1$. In Figure~\ref{fig:app-2}, we depict the boundary risk for different choices of $\eps$.  As observed, the boundary risk decreases to zero, as the dimensions ratio $d/k$ grows to infinity.

\begin{figure*}
	\centering
	\begin{subfigure}[b]{0.4\textwidth}
		\centering
		\includegraphics[scale=0.4]{identity.pdf}
		\caption{Feature mapping $\vph(t)=t$ and adversary's power $\eps=1$ }
		\label{fig:app:identity-1}
	\end{subfigure}%
	\hfill
	\begin{subfigure}[b]{0.4\textwidth}
		\centering
		\includegraphics[scale=0.4]{leaky.pdf}
		\caption{Feature mapping $\vph(t)=t/4+\sign(t)3t/4$ and adversary's power $\eps=1$}
		\label{fig:app:leaky-1}
	\end{subfigure}
		
	\begin{subfigure}[b]{0.4\textwidth}
		\centering
		\includegraphics[scale=0.54]{modified_quadratic.pdf}
		\caption{Feature mapping $\vph(t)=t+\sign(t)t^2$ and adversary's power $\eps=1$}
		\label{fig:app:quadratic-1}
	\end{subfigure}%
	\hfill
	\begin{subfigure}[b]{0.4\textwidth}
		\centering
		\includegraphics[scale=0.54]{modified_tanh.pdf}
		\caption{Feature mapping $\vph(t)=\tanh(t)$ and adversary's power $\eps=1$.}
		\label{fig:app:tanh-1}
	\end{subfigure}
	\caption{Effect of dimensions ratio $d/k$ on the standard, adversarial, and boundary risks of the linear classifier $h_\th(x)=\sign(x^\sT \th)$ with $\th$ being the robust empirical risk minimizer~\eqref{eq: tmp9}. 
Samples are generated from the Gaussian mixture model \eqref{eq: gaussian-mix} with balances classes ($p=1/2$), and with four choices of feature mapping $\vph$: (a)\; $\vph(t)=t$, (b)\ $\vph(t)=3t/4+\sign(t) t/4$,
(c)\ $\vph(t)=t+\sign(t)t^2$ and (d)\, $\vph(t)=\tanh(t)$. In these experiments, the ambient dimension $d$ is fixed at $100$, and the manifold dimension $k$ varies from $1$ to $100$. The sample size is $n=300$ the classes average $\mu$ and the weight matrix $W$ have i.i.d. entries from $\normal(0,1/k)$. The adversary's power is fixed at $\eps=1$. For each fixed values of $k$ and $d$, we consider $M=20$ trails of the setup.  Solid curves represent the average results across these trials, and the shaded areas represent one standard deviation above and below the corresponding curves.}
	\label{fig:app-1}
\end{figure*}

\begin{figure*}
	\centering
	\begin{subfigure}[b]{0.47\textwidth}
		\centering
		\includegraphics[scale=0.4]{multi_identity.pdf}
		\caption{Boundary risk with the feature mapping $\vph(t)=t$ and for multiple values of adversary's power $\eps$ }
		\label{fig:app:identity-2}
	\end{subfigure}%
	\hfill
	\begin{subfigure}[b]{0.47\textwidth}
		\centering
		\includegraphics[scale=0.4]{multi_leaky.pdf}
		\caption{Boundary risk with the feature mapping $\vph(t)=t/4+\sign(t)3t/4$ and for multiple values of adversary's power $\eps$}
		\label{fig:app:leaky-2}
	\end{subfigure}
		
	\begin{subfigure}[b]{0.47\textwidth}
		\centering
		\includegraphics[scale=0.4]{modified_multi_quadratic.pdf}
		\caption{Boundary risk with the feature mapping $\vph(t)=t+\sign(t)t^2$ and for multiple values of adversary's power $\eps$}
		\label{fig:app:quadratic-2}
	\end{subfigure}%
	\hfill
	\begin{subfigure}[b]{0.47\textwidth}
		\centering
		\includegraphics[scale=0.4]{modified_multi_tanh.pdf}
		\caption{Boundary risk with the feature mapping $\vph(t)=\tanh(t)$ and for multiple values of adversary's power $\eps$}
		\label{fig:app:tanh-2}
	\end{subfigure}
	\caption{Effect of dimensions ratio $d/k$ on the boundary risk of the linear classifier $h_\th(x)=\sign(x^\sT \th)$ with $\th$ being the robust empirical risk minimizer~\eqref{eq: tmp9}. 
Samples are generated from the Gaussian mixture model \eqref{eq: gaussian-mix} with balances classes ($p=1/2$), and with four choices of feature mapping $\vph$: (a)\; $\vph(t)=t$, (b)\ $\vph(t)=3t/4+\sign(t) t/4$,
(c)\ $\vph(t)=t+\sign(t)t^2$ and (d)\, $\vph(t)=\tanh(t)$. In these experiments, the ambient dimension $d$ is fixed at $100$, and the manifold dimension $k$ varies from $1$ to $100$. The sample size is $n=300$ the classes average $\mu$ and the weight matrix $W$ have i.i.d. entries from $\normal(0,1/k)$. We consider different levels of the adversary's power  $\eps\in\{1,2,4\}$. For each fixed values of $k$ and $d$, we consider $M=20$ trails of the setup.  Solid curves represent the average results across these trials, and the shaded areas represent one standard deviation above and below the corresponding curves.}
	\label{fig:app-2}
\end{figure*}

\fi




\section*{Acknowledgement}
A.~Javanmard is partially supported by the Sloan Research Fellowship
in mathematics, an Adobe Data Science Faculty Research Award and the
NSF CAREER Award DMS-1844481.
\bibliographystyle{apalike}
\bibliography{mybib,Bibfiles2,mm_bib}





\end{document}